\documentclass{article}

\usepackage{microtype}
\usepackage{graphicx}
\usepackage{subcaption}
\usepackage{booktabs} 

\usepackage{hyperref}
\usepackage{paralist}

\usepackage[accepted]{icml2026}

\usepackage{amsmath}
\usepackage{amssymb}
\usepackage{mathtools}
\usepackage{amsthm}

\usepackage{multirow}
\usepackage[table,xcdraw]{xcolor}

\usepackage[capitalize,noabbrev]{cleveref}

\theoremstyle{plain}
\newtheorem{theorem}{Theorem}[section]
\newtheorem{proposition}[theorem]{Proposition}

\theoremstyle{definition}

\newtheorem{assumption}[theorem]{Assumption}
\theoremstyle{remark}

\usepackage[textsize=tiny]{todonotes}

\usepackage{makecell}
\usepackage{xcolor}

\definecolor{cream}{RGB}{252, 249, 240}
\definecolor{deepgreen}{RGB}{0, 102, 85}

\usepackage[most]{tcolorbox}
\usepackage{booktabs}
\usepackage{multirow}
\usepackage{graphicx}

\usepackage{amssymb}

\usepackage{amsthm}

\theoremstyle{plain}

\usepackage{xcolor} 
\newcommand{\wei}[1]{\textcolor{orange}{\#Wei:~#1\#}}

\newcommand{\proposedmodel}{TSEF}

\icmltitlerunning{Exposing Vulnerabilities in Explanation for Time Series Classifiers via Dual-Target Attacks}

\begin{document}

\twocolumn[
  \icmltitle{Exposing Vulnerabilities in Explanation for Time Series Classifiers\\  via Dual-Target Attacks
  }
  


  \icmlsetsymbol{equal}{*}

  \begin{icmlauthorlist}
    \icmlauthor{Bohan Wang}{emory}
    \icmlauthor{Zewen Liu}{emory}
    \icmlauthor{Lu Lin}{psu}
    \icmlauthor{Hui Liu}{msu}
    \icmlauthor{Li Xiong}{emory}
    \icmlauthor{Ming Jin}{gri}
    \icmlauthor{Wei Jin}{emory}
  \end{icmlauthorlist}

    \icmlaffiliation{emory}{Emory University, USA}
    \icmlaffiliation{psu}{The Pennsylvania State University, USA}
    \icmlaffiliation{msu}{Michigan State University, USA}
    \icmlaffiliation{gri}{Griffith University, Australia}


  \icmlcorrespondingauthor{Wei Jin}{wei.jin@emory.edu}

  \icmlkeywords{Machine Learning, ICML}

  \vskip 0.3in
]



\printAffiliationsAndNotice{}  

\begin{abstract}

Interpretable time series deep learning systems are often assessed by checking temporal consistency on explanations, implicitly treating this as evidence of robustness. We show that this assumption can fail: Predictions and explanations can be adversarially decoupled, enabling targeted misclassification while the explanation remains plausible and consistent with a chosen reference rationale. We propose \proposedmodel{} (Time Series
Explanation Fooler), a dual-target attack that jointly manipulates the classifier and explainer outputs. In contrast to single-objective misclassification attacks that disrupt explanation and spread attribution mass broadly, \proposedmodel{} achieves targeted prediction changes while keeping explanations consistent with the reference. Across multiple datasets and explainer backbones, our results consistently reveal that explanation stability is a misleading proxy for decision robustness and motivate coupling-aware robustness evaluations for trustworthy time series tasks. The code is available at \url{https://github.com/Bohan7/TSEF}.





\end{abstract}

\nocite{langley00}

\section{Introduction}
Deep neural networks (DNNs) have achieved success in time series classification, enabling breakthroughs in healthcare, finance, and industrial systems~\citep{liu2025learning, wen2024abstracted, zhou2023one, mohammadi2024deep,wang2025unified}.
Yet, their black-box nature leaves a fundamental gap: While we can train models that classify signals with high accuracy, we often cannot tell why they made a specific decision.
This opacity is especially concerning in high-stakes scenarios such as medical diagnosis or fault detection, where practitioners must understand the evidence behind a prediction before acting on it~\citep{rojat2021explainable, theissler2022explainable}. To bridge this gap, interpretable time series deep learning systems (ITSDLS) pair a predictor with an explainer that highlights salient time–channel regions deemed responsible for the prediction~\citep{queen2023encoding, liu2024timex++}.
In practice, such explanations are increasingly used for model debugging, expert verification, and even adversarial monitoring, providing an appealing layer of human oversight~\citep{rojat2021explainable, vsimic2021xai}.

\begin{figure}[t]
\vskip -1em
    \centering
    \includegraphics[width=\linewidth]{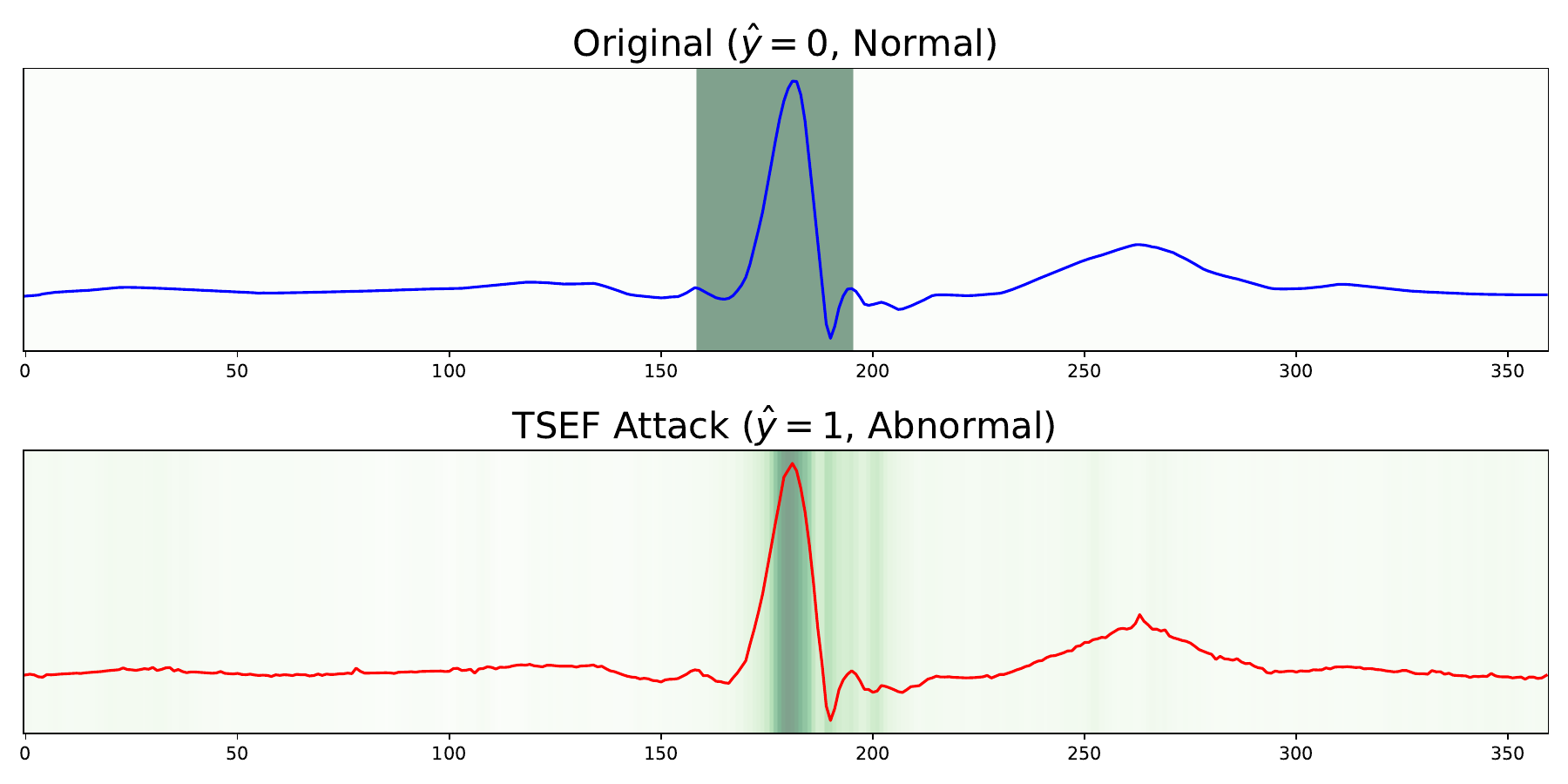}
    \vskip -0.5em
    \caption{
        Targeted manipulation of prediction and explanation on an ECG sample.
        \textbf{Top (Target Setup):} The input is correctly classified as Normal ($\hat{y}=0$). The green band defines the reference explanation $\mathbf{A}^{\prime}$ that the attacker aims to fabricate.
        \textbf{Bottom (TSEF Attack):} The proposed TSEF attack successfully flips the model's prediction to Abnormal ($\hat{y}=1$). Crucially, the resulting explanation (generated by \textsc{TimeX++}~\citep{liu2024timex++}) concentrates on the target region $\mathbf{A}^{\prime}$ defined above.
    }
    
    \label{fig:motivating_example}
    \vskip -2em
\end{figure}

However, this workflow often relies on an implicit assumption: The explanation is faithful enough to be trusted by default.
When a model raises an alarm, the accompanying explanation is frequently taken as a reliable rationale, rather than a hypothesis that must be stress-tested. Crucially, this default trust can be problematic: If the explanation is not faithful, it may provide a seemingly correct yet misleading justification, undermining human oversight and downstream decision-making. In vision and language, a growing body of work~\citep{ghorbani2019interpretation, zhang2020interpretable, ivankay2022fooling} has already shown that explanations can be misled by carefully crafted perturbations. {Yet these vulnerabilities have not been systematically examined for time series models, whose inputs exhibit strong temporal dependencies~\citep{ismail2019deep}, characteristic periodic or frequency-domain structures~\citep{wu2023timesnet, mu2025mptsnet}, and are routinely deployed in high-stakes settings~\citep{lauritsen2020explainable, talukder2025explainable, je2024time, feng2026SeizureFormer,liu2024review}.} For example, in ICU and ECG monitoring~\citep{lauritsen2020explainable, talukder2025explainable}, clinicians and operators explicitly rely on explanations to validate alarms, debug unexpected behavior, and decide whether to act on a model's output.



At the same time, adversarial attacks have shown that time series classifiers can be fooled by imperceptible perturbations that change predictions~\citep{mode2020adversarial, harford2020adversarial, belkhouja2022adversarial, govindarajulu2023targeted, liu2023robust}.
The risk is further amplified when explanations are also subject to manipulation.
If an attacker can simultaneously dictate what a model predicts and why it appears to predict it,  interpretability will collapse from a safeguard to an illusion.
Consider a clinical decision-support system: a malicious actor could subtly alter an ECG so that the classifier outputs a desired diagnosis, while the explainer highlights a convincing, but fabricated, physiological rationale. Figure~\ref{fig:motivating_example} illustrates the severity of this threat: An attacker can jointly enforce a target label and a target attribution pattern. Such attacks can bypass human oversight and introduce downstream risk. This motivates the following question: \textit{Can we trust the \textbf{joint} prediction--explanation output of an ITSDLS under adversarial perturbations?}

We answer this question by introducing TSEF (Time Series Explanation Fooler), an attack framework that crafts an adversarial example to (i) force a frozen classifier to a target class and (ii) steer its coupled explainer toward a reference explanation. This joint control is fundamentally different from attacking classifiers alone, and is not addressed by existing attacks on time series classification~\citep{ding2023black, gu2025towards} or joint classification and explainer attacks in vision/NLP~\citep{ghorbani2019interpretation, zhang2020interpretable, ivankay2022fooling}, for two modality-specific reasons. \textbf{First}, pattern-level control is essential: Time series models respond to temporal structures (e.g., trend, and periodic components), so naive per-timestep noise may not be strong enough to manipulate the model explanations or predictions. \textbf{Second}, time series exhibit a high-dimensional paradox: While the high-dimensional input offers a large norm-bounded attack surface, unconstrained optimization tends to produce dense, temporally incoherent updates that are difficult to steer toward a reference explanation, as revealed by our theoretical analysis in Section~\ref{sec:highdim_paradox}.
To resolve these challenges, TSEF  restricts perturbations to a structured subspace governed by two components. A temporal vulnerability mask localizes \emph{when} to edit through adaptive support reduction (sparse, connected segments), while a frequency filter determines \emph{how} to edit by selecting \emph{which} spectral content to modify, enabling controllable and contiguous explanation manipulation under the same budget. Consequently, our work exposes a critical prediction-interpretation gap in time series decision pipelines: An attacker can simultaneously mislead both the model's prediction and its explanation, forcing a chosen label while making the explainer highlight a plausible rationale for the original class. Our contributions can be summarized as:

\begin{compactenum}[\textbullet]
\item \textbf{Systematic vulnerability study.}
We provide the first assessment of adversarial robustness in time series explainers, showing they can be steered to match prescribed rationales even under target misclassification.


\item \textbf{High-dimensional paradox for unstructured attacks.}
We show that pointwise-bounded attacks disperse attribution mass as dimension grows, yielding diffuse explanations that fail to match sparse, contiguous target rationales.

\item \textbf{TSEF: dual-target, structured attack.}
We formulate a joint prediction--explanation objective and instantiate {TSEF}, decoupling \emph{when} to perturb (sparse, connected temporal mask) from \emph{how} to perturb (frequency-domain filter) under a perturbation budget.

\item \textbf{Extensive validation and testbed.}
We validate the threat across multiple datasets and interpreters, and release a modular testbed that standardizes coupled classifier--explainer evaluation under shared attack interfaces.
\end{compactenum}

\noindent \textbf{Conflict of Interest Disclosure}
The authors declare no financial conflicts of interest.

\section{Related Work}
\label{sec:related_work}

\noindent\textbf{Time Series Explainers.} 
While Explainable AI (XAI) has been extensively studied for image and text~\citep{danilevsky2020survey, samek2021explaining}, explaining time series models presents unique challenges due to complex temporal dependencies~\citep{rojat2021explainable,theissler2022explainable}.
Early works adapted generic attribution methods (e.g., Integrated Gradients) to the temporal domain~\citep{sundararajan2017axiomatic, lundberg2017unified, ismail2020benchmarking}.
Recent approaches develop time series specific explainers, such as perturbation-based methods~\citep{crabbe2021explaining, leung2023temporal, chuang2023cortx, huang2026shapex} and mask-based methods like \textsc{TimeX} and \textsc{TimeX++}~\citep{queen2023encoding,liu2024timex++}, which produce in-distribution and coherent explanations.
\textbf{Our focus is orthogonal:} we study the \emph{security} of these explainers under \emph{adaptive adversaries}, showing that an attacker can enforce a target label while steering the explainer toward a \emph{reference} explanation.



\noindent\textbf{Time Series Attacks.} Time series adversarial literature primarily targets \emph{prediction robustness} in classification/forecasting, adapting image-style attacks and proposing structure-aware variants (e.g., region-focused or shapelet/high-frequency constrained perturbations) to improve stealth and effectiveness~\citep{goodfellow2014explaining, chakraborty2018adversarial, goyal2023survey,karim2020adversarial,ding2023black,gu2025towards,mode2020adversarial,liu2023robust}.
A smaller line evaluates explainer quality/robustness via \emph{non-adaptive} perturbation diagnostics (e.g., injecting random noise into regions highlighted by an explainer and measuring accuracy drop)~\citep{nguyen2024robust}.
\textbf{Key gap:} existing methods either (i) attack predictions without controlling what explanations display, or (ii) probe explanations with random, non-targeted noise.
\textbf{In contrast,} we formulate a \emph{dual-target} objective that \emph{simultaneously} forces a designated prediction and a designated explanation, exposing a failure mode that prediction-only robustness tests and noise-based explainer evaluations miss.

\noindent\textbf{Explainer Robustness in Vision and NLP.}
Prior work in vision and NLP shows that explanations can be manipulated under small perturbations, either by selectively fooling attributions or jointly attacking predictions and explanations~\citep{ghorbani2019interpretation,zhang2020interpretable,ivankay2022fooling}.
However, achieving this in time series is non-trivial: effective attacks require \emph{pattern-level} control, yet the $T{\times}D$ surface often yields dense, temporally incoherent updates that are difficult to steer toward a \emph{target} explanation (Section~\ref{sec:highdim_paradox}). \textbf{Our contribution is specific to time series:} we instantiate a time--frequency structured dual-target attack tailored to temporal patterns, enabling targeted label manipulation while matching a reference rationale for explanation-based auditing.
Further details are in Appendix~\ref{app:related_work}.

\section{Preliminaries and Problem Definition}
\label{sec:Preliminaries}


\textbf{Time Series Classifier.} A multivariate time series is a sequence of observations represented by a tensor $\mathbf{X} \in \mathbb{R}^{T \times D}$, where $T$ is the length of the sequence and $D$ is the number of features. A time series classifier, $f: \mathbb{R}^{T \times D} \rightarrow \mathbb{R}^{|\mathcal{C}|}$, is a function that maps an input time series $\mathbf{X}$ to a vector of logits or probabilities over a set of $|\mathcal{C}|$ classes. The final prediction is given by $\hat{y}=\arg \max f(\mathbf{X})$. We assume the classifier $f$ is pre-trained and differentiable.

\textbf{Time Series Explanations.} An explanation method, or explainer, $\mathcal{H}^E$, is a function that computes the importance of each feature at each time step for a prediction. For an input $\mathbf{X}$, the explainer outputs a saliency map $\mathbf{A} \in \mathbb{R}^{T \times D}$, where the magnitude of each element $\mathbf{A}[t, d]$ corresponds to the importance of the feature $d$ at time $t$. This general definition encompasses a wide range of explainers~\citep{sundararajan2017axiomatic, crabbe2021explaining, huang2026shapex, leung2023temporal, chuang2023cortx}, including \textsc{TimeX}~\citep{queen2023encoding} and \textsc{TimeX++}~\citep{liu2024timex++}.



\textbf{Threat Model and Adversarial Objectives.}
We operate under a white-box threat model, where the adversary has full knowledge of the target classifier $f$ and the explainer $\mathcal{H}^E$. The adversary's goal is to craft a small, norm-bounded perturbation $\delta \in \mathbb{R}^{T \times D}$ to generate an adversarial example $\tilde{\mathbf{X}}=\mathbf{X}+\delta$. The perturbation is constrained to be small in the $L_{\infty}$ norm, i.e., $\|\delta\|_{\infty} \leq \epsilon$, to ensure it is not easily detectable. In this work, we investigate the adversarial goal: 
$\min_{\tilde{\mathbf{X}}}\;
d\bigl(\mathcal{H}^E(\tilde{\mathbf{X}}), \mathbf{A}^{\prime}\bigr)
\;\text{s.t.}\;
f(\tilde{\mathbf{X}})=y^{\prime},\;
\|\mathbf{X}-\tilde{\mathbf{X}}\|_{\infty}\le\epsilon$, where $\mathbf{X}$ is the original time series, $\tilde{\mathbf{X}}$ the attacked series, $f$ the classifier, $\mathcal{H}^E$ the explainer, $y^{\prime}$ the target class, and $\mathbf{A}^{\prime}$ the reference explanation.
The function $d(\cdot, \cdot)$ measures discrepancy between explanations (e.g., MSE, cosine distance, KL divergence after normalization).

\vspace{-0.5em}
\section{Method: TSEF}
\vspace{-0.3em}
In this section, we first introduce the high-dimensional paradox challenge in attacking time series classifiers and explainers, and then propose our solution TSEF.

\begin{figure*}[t]
    \centering
    \includegraphics[width=0.9\linewidth]{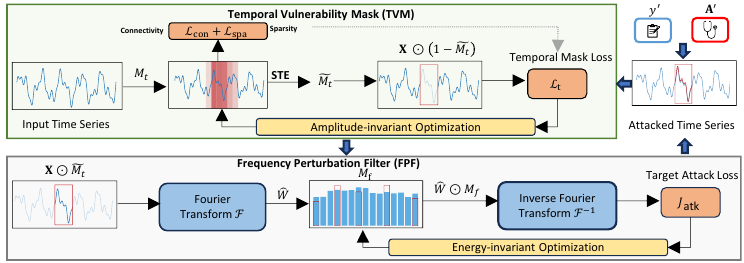}
    \vskip -0.5em
        \caption{Overview of TSEF: TVM learns $\mathbf{M}_t$ to localize vulnerable temporal windows (visualized as a single window here for clarity), and FPF perturbs the masked signal in the frequency domain via $\mathbf{M}_f$. 
        }
    \label{fig:tsef_schematic}
    \vskip -1em
\end{figure*}

\vspace{-0.2em}
\subsection{High-Dimensional Paradox}
\label{sec:highdim_paradox}

Standard adversarial attacks are typically \emph{explanation-agnostic}: they use the full high-dimensional input space ($d=T\times D$) to optimize a classification objective only.
Under an $\ell_\infty$ budget, these attacks apply small point-wise changes to many coordinates, and the accumulated effect can efficiently flip the prediction~\citep{goodfellow2014explaining}.
While effective for misclassification,  we will show that this strategy creates a structural conflict with our goal of controlling the explainer. We quantify this conflict through the amount of attribution assigned \emph{outside} the target region.
Let $\Omega$ denote the small contiguous region supporting the reference explanation, and let $\mathbf{A}^{\prime}$ be a hard mask strictly supported on $\Omega$ (where $|\Omega|\ll d$).
Let $\mathbf A(\mathbf X)\in\mathbb R^{d}$ be the attribution map produced by an explainer, and let $\Omega^c$ denote the set of coordinates outside $\Omega$.

\begin{theorem}[Dense $\ell_\infty$ steps increase attribution outside the target region]
\label{prop:dense_diffusion_concise_repeat}
Let 
$g_c:=\;\nabla_{\mathbf X}\mathcal L_{\mathrm{cls}}(f(\mathbf X),y_{\mathrm{tgt}})$ 
denote the gradient of the target-label classification loss, and consider the one-step dense $\ell_\infty$ update $\delta:=-\varepsilon\,\mathrm{sign}(g_c)$ and $\tilde{\mathbf X}=\mathbf X+\delta$.
Suppose the technical conditions in Appendix~\ref{app:proof_dense_diffusion} hold, yielding constants $\gamma>0$, $L>0$, and $\beta_0\ge0$.
Then for any $\varepsilon\in\left[\frac{4\beta_0}{\gamma},\frac{\gamma}{2L}\right]$,
\begin{equation}
\mathbb E\!\left[\|\mathbf A(\tilde{\mathbf X})\|_{1,\Omega^c}\right]
\ \ge\ c\,\varepsilon\,(d-|\Omega|),
\label{eq:offtarget_blowup}
\end{equation}
for some constant $c>0$ independent of $d$.
Consequently,
\begin{equation}
\mathbb E\!\left[\|\mathbf A(\tilde{\mathbf X})-\mathbf{A}^{\prime}\|_{1}\right]
\ \ge\
c\,\varepsilon\,(d-|\Omega|).
\label{eq:cannot_match_target}
\end{equation}
\end{theorem}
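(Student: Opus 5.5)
The plan is to reduce the statement to a per-coordinate lower bound on the attribution magnitude, and then sum over the $d-|\Omega|$ off-target coordinates. Since the technical conditions are deferred to the appendix, I will state the natural ones that produce the constants $\gamma$, $L$, $\beta_0$, and organize the argument around them. I model the attribution as locally gradient-like: each coordinate $i$ satisfies $|\mathbf A_i(\tilde{\mathbf X})|\ge \kappa\,|\tilde{\mathbf X}_i-\mathbf X_i|\,|\partial_i F(\tilde{\mathbf X})|$ up to a baseline term, as for gradient$\times$input or integrated-gradients-type explainers. Mask explainers such as \textsc{TimeX++} would need a comparable sensitivity condition. The three constants are then read as follows.
\begin{itemize}
\item $\gamma$: a nondegeneracy bound $\mathbb E|\partial_i F(\mathbf X)|\ge\gamma$ on off-target coordinates, where the expectation is over the data distribution.
\item $L$: a Lipschitz (smoothness) constant, so that the gradient moves by at most $L\varepsilon$ per coordinate under an $\ell_\infty$ step of size $\varepsilon$.
\item $\beta_0$: an upper bound on the contribution of the clean attribution or baseline that could cancel the induced change.
\end{itemize}

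The key steps, in order, are these.
\begin{enumerate}
\item \textbf{The perturbation is dense.} Because $\delta=-\varepsilon\,\mathrm{sign}(g_c)$, every coordinate with $g_{c,i}\neq 0$ receives $|\delta_i|=\varepsilon$. I assume, as part of the conditions, that this holds almost surely off $\Omega$. This is the crux: the update carries no knowledge of $\Omega$.
\item \textbf{The gradient stays large.} Smoothness gives $|\partial_i F(\tilde{\mathbf X})|\ge|\partial_i F(\mathbf X)|-L\varepsilon$. In expectation this is at least $\gamma-L\varepsilon\ge\gamma/2$ whenever $\varepsilon\le\gamma/(2L)$, which produces the upper endpoint of the admissible interval.
\item \textbf{A per-coordinate lower bound.} The change-induced attribution is at least $\kappa\varepsilon\gamma/2$, while the baseline term (bounded by $\beta_0$) could subtract from it. By the reverse triangle inequality I get $\mathbb E|\mathbf A_i(\tilde{\mathbf X})|\ge\kappa\varepsilon\gamma/2-\kappa'\beta_0$. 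Requiring $\varepsilon\ge 4\beta_0/\gamma$ makes the baseline at most a quarter of $\varepsilon\gamma$, so the bound becomes $\ge c\,\varepsilon$ with $c=\kappa\gamma/4$. This explains the lower endpoint.
\item \textbf{Sum and use linearity.} Summing over $i\in\Omega^c$ and using linearity of expectation gives \eqref{eq:offtarget_blowup}. None of the constants depend on $d$, since each coordinate is bounded separately.
\end{enumerate}

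For \eqref{eq:cannot_match_target}, note that $\mathbf A'$ vanishes on $\Omega^c$. Hence
\[
\|\mathbf A(\tilde{\mathbf X})-\mathbf A'\|_1\ \ge\ \|\mathbf A(\tilde{\mathbf X})-\mathbf A'\|_{1,\Omega^c}\ =\ \|\mathbf A(\tilde{\mathbf X})\|_{1,\Omega^c},
\]
and taking expectations gives the claim.

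The main obstacle is step 3. I must make the per-coordinate lower bound hold uniformly in $d$, and turn an attribution-sensitivity assumption into a clean linear-in-$\varepsilon$ bound without the clean attribution cancelling it. If the pointwise reverse-triangle argument fails, I would first try a first-order Taylor expansion of $\mathbf A$ along $\delta$, together with a coordinatewise lower bound $\mathbb E|(J_{\mathbf A}\delta)_i|\ge\gamma\varepsilon$. This bound would come from the sign alignment of $\delta$ with $g_c$, with the Taylor remainder bounded by $L\varepsilon^2\le\gamma\varepsilon/2$. That route yields the same interval for $\varepsilon$.
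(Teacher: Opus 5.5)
Your overall structure (per-coordinate lower bound, sum over off-window coordinates, then drop the $\Omega$ terms to bound the mismatch) matches the paper, and your fallback is essentially the paper's proof. Your main route, however, is genuinely different.

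The paper never models the explainer. It Taylor-expands each attribution along the actual step, $A_i(\mathbf X+\delta)=A_i(\mathbf X)+\nabla A_i(\mathbf X)^\top\delta+r_i$, and assumes three things directly:
\begin{itemize}
\item $\mathbb E|\nabla A_i(\mathbf X)^\top\mathrm{sign}(g_c)|\ge\gamma$, on a subset $\mathcal U\subseteq\Omega^c$ with $|\mathcal U|\ge\eta(d-|\Omega|)$;
\item $|\delta^\top\nabla^2A_i(\mathbf X+t\delta)\,\delta|\le L\|\delta\|_\infty^2$, so $|r_i|\le\frac{L}{2}\varepsilon^2$;
\item the clean attribution satisfies $\mathbb E|A_i(\mathbf X)|\le\beta_0$.
\end{itemize}
The reverse triangle inequality then gives $\gamma\varepsilon-\beta_0-\frac{L}{2}\varepsilon^2\ge\frac{\gamma}{2}\varepsilon$ on the stated interval, with $c=\eta\gamma/2$.

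Your main route instead posits a diagonal ``perturbation $\times$ gradient'' lower bound. It gets the upper endpoint from first-order stability of $\partial_iF$ rather than from a second-order remainder. What you gain is an explicit mechanism: each perturbed coordinate lights up its own attribution, and that is exactly where the density of the sign step (your Step 1) is used. In the paper, density enters only as a heuristic justification of the sensitivity assumption.

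What the paper gains is generality:
\begin{itemize}
\item It applies to any twice-differentiable explainer, including the \textsc{TimeX}/\textsc{TimeX++} masks that are its main targets. It also covers IG, whose attribution involves a path-averaged gradient rather than $\partial_iF(\tilde{\mathbf X})$.
\item It needs sensitivity only on a constant fraction of $\Omega^c$.
\item It uses the clean attribution itself as the baseline, with coefficient one, so no extra constants appear.
\end{itemize}

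Three bookkeeping remarks on your version:
\begin{itemize}
\item In Step 3, getting $c=\kappa\gamma/4$ from the lower endpoint $4\beta_0/\gamma$ silently requires the baseline coefficient to satisfy $\kappa'\le\kappa$. Otherwise $\kappa\gamma\varepsilon/2-\kappa'\gamma\varepsilon/4$ can be nonpositive, and the endpoint must be rescaled, e.g.\ to $4\kappa'\beta_0/(\kappa\gamma)$.
\item In your fallback, the bound $\mathbb E|(J_{\mathbf A}\delta)_i|\ge\gamma\varepsilon$ cannot be derived from the alignment of $\delta$ with $g_c$, since nothing ties $\nabla A_i$ to $g_c$. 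It has to be assumed outright, as the paper does.
\item In both versions the $d$-independence of $L$ is itself an assumption. For a dense $\delta$, your Lipschitz constant amounts to a bound on the $\ell_1$-norms of the rows of the Hessian of $F$. The paper's curvature bound likewise controls a full quadratic form.
\end{itemize}
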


We provide the proof of Theorem~\ref{prop:dense_diffusion_concise_repeat} in  Appendix~\ref{app:proof_dense_diffusion}. In this theorem, Eq.~\eqref{eq:offtarget_blowup} states that after a single dense $\ell_\infty$ step, the total absolute attribution outside the target region $\Omega$ is lower bounded and scales with the number of coordinates outside $\Omega$; Eq.~\eqref{eq:cannot_match_target} then implies a corresponding lower bound on the overall mismatch to the reference explanation, which also grows with $(d-|\Omega|)$.
This directly conflicts with the form of explanations that look plausible for time series: they are typically localized, highlighting a short contiguous temporal structure (e.g., a brief event, trend, or periodic component) and showing little evidence elsewhere.
A dense $\ell_\infty$ update modifies almost all coordinates, not only those in $\Omega$; even if each outside coordinate receives only a tiny amount of attribution, these contributions accumulate across many coordinates and yield a spread-out explanation.

\textbf{Motivation for TSEF.} A natural baseline is to optimize a joint objective $\mathcal L_{\mathrm{cls}}+\lambda \mathcal L_{\mathrm{exp}}$ under the same $\ell_\infty$ budget.
However, the scaling in Eq.~\eqref{eq:cannot_match_target} highlights \emph{high-dimensional} difficulty in time series: the classification term encourages changes spread across many coordinates (i.e, time steps and channels), while matching a localized reference rationale requires keeping the explanation concentrated within $\Omega$.
To avoid the growth in Eq.~\eqref{eq:cannot_match_target}, the explanation term would need to counteract the classification-driven updates across most coordinates outside $\Omega$, which becomes increasingly difficult as $d$ grows. Theorem~\ref{prop:dense_diffusion_concise_repeat} motivates attacks that explicitly control \emph{where} changes are made.
TSEF follows this principle by decoupling the attack into two components:
(i) identify a small set of temporally connected segments that most strongly influence both the prediction and the explanation, and
(ii) modify only these segments in a pattern-preserving manner.
Frequency-domain editing is well-suited for (ii) because it induces coherent trend or periodicity changes in the time domain, avoiding scattered point-wise changes spread across the entire input.

\subsection{TSEF Attack}
\label{sec:tsef_attack}
To resolve the high-dimensional paradox, TSEF decouples the attack into \emph{when} to attack (localize a vulnerable temporal pattern) and \emph{how} to attack (apply a coherent spectral edit within that pattern).
Concretely, TSEF learns (i) a temporal vulnerability mask ($\mathbf{M}_t$) that selects a time--channel region to manipulate, and (ii) a frequency perturbation filter ($\mathbf{M}_f$) that modifies the dynamics inside the selected region. We learn $(\mathbf{M}_t,\mathbf{M}_f)$ by minimizing the following loss:
\begin{align}
\label{eq:high_level_optimize}
\small
\min_{\mathbf{M}_f}\quad 
J_{\text{atk}}(\mathbf{M}_f;\mathbf{M}_t^*)
&:= d\!\left(\mathcal{H}^E\!\left(\tilde{\mathbf{X}}(\mathbf{M}_t^*,\mathbf{M}_f)\right), \mathbf{A}^{\prime}\right) \nonumber\\
&\quad + \lambda\, L_{\text{cls}}\!\left(f\!\left(\tilde{\mathbf{X}}(\mathbf{M}_t^*,\mathbf{M}_f)\right),y^{\prime}\right) \nonumber\\
\text{s.t.}\quad
\mathbf{M}_t^* \in \arg\min_{\mathbf{M}_t}\quad
& d\!\left(\mathcal{H}^E\!\left(\mathbf{X}\odot(1-\mathbf{M}_t)\right), \mathbf{A}^{\prime}\right) \nonumber\\
&\quad + \lambda\, L_{\text{cls}}\!\left(f\!\left(\mathbf{X}\odot(1-\mathbf{M}_t)\right),y^{\prime}\right).
\nonumber
\end{align}
We reconstruct the adversarial input by filtering the selected segment in the Fourier domain:
$
\tilde{\mathbf{X}}(\mathbf{M}_t,\mathbf{M}_f)=\mathcal{F}^{-1}\!\left(\mathcal{F}\!\left(\mathbf{M}_t \odot \mathbf{X}\right) \odot \mathbf{M}_f\right)+ (1-\mathbf{M}_t) \odot \mathbf{X}.
$
The inner loop localizes vulnerable patterns, while the outer loop refines the spectral perturbation under the $\ell_\infty$ budget. The pseudo algorithm is shown in Algorithm~\ref{alg:training} in Appendix~\ref{app:algo}.

\subsubsection{Temporal Vulnerability Mask}
\label{sec:tvm}

A temporal pattern is \emph{vulnerable} if suppressing it most effectively advances the \emph{joint} objective (target prediction and reference explanation).
Accordingly, the Temporal Vulnerability Mask $\mathbf{M}_t \in [0,1]^{T \times D}$ parameterizes learnable \emph{suppression probabilities} over time--channel locations to localize such patterns. We form the masked input
$
\mathbf{X}' \;=\; \mathbf{X}\odot(1-\mathbf{M}_t),
$
so that higher $\mathbf{M}_t[t,d]$ indicates stronger suppression at location $(t,d)$.
We optimize $\mathbf{M}_t$ in the inner loop to localize a compact region whose removal most improves the joint target objective.

\noindent\textbf{Structured Mask: Sparsity and Connectivity.}
Without regularization, the inner optimization may yield dense masks (trivially suppressing the entire series).
We therefore regularize $\mathbf{M}_t$ to encourage sparse and contiguous patterns.
For sparsity, we limit how much information the (stochastic) mask carries about the input.
Let $M\in\{0,1\}^{T\times D}$ be a binary mask random variable parameterized by $\mathbf{M}_t$ (e.g., independent Bernoulli entries),
i.e., $P(M\mid \mathbf{X})$ is induced by the probabilities $\mathbf{M}_t(\mathbf{X})$.
We minimize the mutual information $I(\mathbf{X};M)$.
Using a standard variational upper bound with a prior $Q(M)$ independent of $\mathbf{X}$, we have
\begin{equation}
I(\mathbf{X};M)\;\le\;\mathbb{E}_{\mathbf{X}}\!\left[\mathrm{KL}\!\left(P(M|\mathbf{X}) \,\|\, Q(M)\right)\right],
\end{equation}
with the derivation in Appendix~\ref{prop:mi-upper-bound}.
In our parametric setting, $P(M|\mathbf{X})$ is defined by the probabilities $\mathbf{M}_t$.
We instantiate the bound as an element-wise KL divergence against a sparse Bernoulli prior with rate $r$ (e.g., $r=0.3$):
\begin{equation}
\mathcal{L}_{\mathrm{spa}}(\mathbf{M}_t)
=
\frac{1}{TD}\sum_{t=1}^{T}\sum_{d=1}^{D}
\mathrm{KL}\!\left(\mathrm{Bern}(\mathbf{M}_t[t,d])\,\|\,\mathrm{Bern}(r)\right).
\nonumber
\end{equation}
To promote temporally contiguous patterns rather than scattered points, we further add a connectivity regularizer:
\begin{equation}
\mathcal{L}_{\mathrm{con}}(\mathbf{M}_t)
=\frac{1}{TD}\sum_{d=1}^{D}\sum_{t=1}^{T-1}\bigl(\mathbf{M}_t[t+1,d]-\mathbf{M}_t[t,d]\bigr)^2.
\nonumber
\end{equation}
\noindent\textbf{Full Inner Objective.}
For each input $\mathbf{X}$, we optimize $\mathbf{M}_t$ by minimizing the joint adversarial and regularization loss:
\begin{equation}
\label{eq:inner_objective}
\begin{aligned}
\mathcal{L}_t(\mathbf{M}_t) \;=\;&
\lambda_{\mathrm{exp}}\, d\!\left(\mathcal{H}^E(\mathbf{X}'),\mathbf{A}^{\prime}\right)
+\lambda_{\mathrm{cls}}\, L_{\mathrm{cls}}\!\left(f(\mathbf{X}'),y^{\prime}\right)\\
&+\lambda_{\mathrm{spa}}\,\mathcal{L}_{\mathrm{spa}}(\mathbf{M}_t)
+\lambda_{\mathrm{con}}\,\mathcal{L}_{\mathrm{con}}(\mathbf{M}_t),
\end{aligned}
\end{equation}
where $\mathbf{X}'=\mathbf{X}\odot(1-\mathbf{M}_t)$ is the masked input.
The inner problem outputs the optimized mask $\mathbf{M}_t^*$, which localizes vulnerable temporal patterns for subsequent frequency-domain refinement.

\noindent\textbf{Amplitude-invariant Optimization.} A practical issue in time series is that amplitudes can vary substantially across channels and time steps; naive gradient updates may over-emphasize large-amplitude coordinates rather than truly sensitive ones. We therefore update the probability mask $\mathbf{M}_t$ using a projected sign step,
$
\mathbf{M}_t \leftarrow \Pi_{[0,1]}\left(\mathbf{M}_t-\eta_t \operatorname{sign}\left(\nabla_{\mathbf{M}_t} \mathcal{L}_t\right)\right),
\nonumber
$
where $\Pi_{[0,1]}$ clips values to $[0,1]$. As shown in Proposition~\ref{prop:mt-magnitude-invariant} (Appendix~\ref{sec:proof_mt}), this update is amplitude-invariant with respect to the raw signal: whether $\mathbf{M}_t[t, d]$ increases or decreases depends on the directional alignment between $\mathbf{X}[t, d]$ and the loss gradient, rather than on $|\mathbf{X}[t, d]|$. This ensures the mask converges to structurally vulnerable regions rather than merely high-amplitude peaks. 


\noindent\textbf{Implementation.}
We implement discrete-like pattern selection using the Concrete (Gumbel--Sigmoid) relaxation with a straight-through estimator (STE)~\citep{jang2016categorical,maddison2016concrete}.
In the forward pass, we sample a hard binary mask $\tilde{\mathbf{M}}_t$ from $\mathbf{M}_t$ (Figure~\ref{fig:tsef_schematic}) and construct
$\mathbf{X}'=\mathbf{X}\odot(1-\tilde{\mathbf{M}}_t)$, while backpropagating gradients to the continuous parameters $\mathbf{M}_t$.
For clarity, we write the masked input as $\mathbf{X}'=\mathbf{X}\odot(1-\mathbf{M}_t)$ in the main text.

\subsubsection{Frequency Perturbation Filter}
\label{sec:fpf}

Given the vulnerable temporal region localized by the optimized mask $\mathbf{M}_t^*$,
the Frequency Perturbation Filter $\mathbf{M}_f$ specifies \emph{how} to alter the underlying dynamics \emph{within} that region. This is crucial for time series: while dense time-domain attacks can easily achieve misclassification,
it often yields temporally incoherent artifacts and fails to synthesize a prescribed, contiguous attribution pattern.
By operating in the frequency domain, $\mathbf{M}_f$ enables \emph{pattern-consistent} modifications (e.g., trends and periodicities)
that simultaneously steer prediction and explanation under the same $\ell_\infty$ budget.

\noindent\textbf{Forward Perturbation in Time--frequency Space.} Let $W := \mathbf{M}_t^* \odot \mathbf{X}$ denote the selected segment and $\widehat{W} := \mathcal{F}(W)$ its spectrum.
We apply an element-wise multiplicative filter $\mathbf{M}_f$ in the frequency domain and reconstruct:
\begin{equation}
\widetilde{W} \;=\; \mathcal{F}^{-1}\!\left(\widehat{W}\odot \mathbf{M}_f\right),
\quad
\tilde{\mathbf{X}} \;=\; \widetilde{W} + (1-\mathbf{M}_t^*)\odot \mathbf{X}.
\nonumber
\end{equation}
This two-stage design decouples \emph{when} to attack (via $\mathbf{M}_t^*$) from \emph{how} to attack (via $\mathbf{M}_f$),
and empirically yields stable, contiguous perturbations that match reference explanations.

\noindent\textbf{Parameterization and Feasibility.}
We constrain the filter to $\mathbf{M}_f\in[0,2]^{K\times D}$, where $\mathbf{M}_f=1$ leaves a component unchanged,
$\mathbf{M}_f<1$ attenuates, and $\mathbf{M}_f>1$ amplifies.
To decouple the \emph{spectral direction} from its \emph{time-domain magnitude}, we parameterize
\begin{equation}
\label{eq:mf-alpha-param}
\mathbf{M}_f
~=~
\Pi_{[0,2]}\!\Bigl(1+\alpha_{\mathrm{freq}}\tanh(\Theta_f)\Bigr),
\qquad
\alpha_{\mathrm{freq}}\ge 0,
\end{equation}
where $\Theta_f\in\mathbb{R}^{K\times D}$ is unconstrained and $\Pi_{[0,2]}$ clips entry-wise to $[0,2]$.
The scalar $\alpha_{\mathrm{freq}}$ is chosen adaptively to satisfy the time-domain $\ell_\infty$ budget (see below).
When working with complex spectra, we additionally enforce conjugate symmetry of $\mathbf{M}_f$ to guarantee a real-valued reconstruction.

\noindent\textbf{Adaptive Scaling.}
Frequency-domain edits are \emph{non-local} in time: small changes to a few spectral bins can produce large spikes after $\mathcal{F}^{-1}$,
and the induced time-domain magnitude depends on the \emph{instance-specific} window selected by $\mathbf{M}_t^*$.
We therefore choose $\alpha_{\mathrm{freq}}$ \emph{adaptively} so that the perturbation induced on the current window respects the attack budget $\epsilon'$. 
Let
$
\Delta \mathbf{X}_{\mathrm{base}}
\;=\;
\mathcal{F}^{-1}\!\bigl(\widehat{W}\odot \tanh(\Theta_f)\bigr)
$
denote the unit-direction time-domain change; applying Eq.~\eqref{eq:mf-alpha-param} yields
$\Delta \mathbf{X}(\alpha_{\mathrm{freq}})=\alpha_{\mathrm{freq}}\Delta \mathbf{X}_{\mathrm{base}}$ and thus
$\|\Delta \mathbf{X}(\alpha_{\mathrm{freq}})\|_\infty
=
\alpha_{\mathrm{freq}}\|\Delta \mathbf{X}_{\mathrm{base}}\|_\infty$.
We set
$
\alpha_{\mathrm{freq}}
\;=\;
\gamma\,\frac{\epsilon'}{\|\Delta \mathbf{X}_{\mathrm{base}}\|_\infty+\tau},
$
where $\gamma\in(0,1)$ is a safety factor (we use $\gamma=0.98$) and $\tau>0$ ensures numerical stability.
After each update of $\Theta_f$, we recompute $\alpha_{\mathrm{freq}}$ and re-instantiate $\mathbf{M}_f$ via Eq.~\eqref{eq:mf-alpha-param}.


\noindent\textbf{Energy-invariant Optimization.}
We update the frequency parameters $\Theta_f$ (which instantiate $\mathbf{M}_f$ via Eq.~\eqref{eq:mf-alpha-param}) using a sign step: $\Theta_f \leftarrow \Theta_f - \eta_f\,\mathrm{sign}\!\left(\nabla_{\Theta_f} J_{\mathrm{atk}}(\tilde{\mathbf{X}})\right)$,
where $J_{\mathrm{atk}}$ is evaluated on the reconstructed $\tilde{\mathbf{X}}$.
A key issue is that the gradient magnitude w.r.t.\ each frequency bin can scale with the spectral \emph{magnitude}
$|\widehat{W}[k,d]|$ (equivalently with $\sqrt{|\widehat{W}[k,d]|^2}$, where $|\widehat{W}[k,d]|^2$ is the bin energy),
so standard gradient descent would be dominated by high-energy bins.
The sign operator removes this scale factor, making the update depend on the \emph{directional correlation}
between each bin and the attack objective rather than on its energy.
This \emph{energy-invariant} effect is formalized in Appendix~\ref{sec:proof_mf}.

\section{Experiment}




\subsection{Experimental Setup}
\textbf{Datasets and Evaluation metrics.}
We evaluate on six benchmarks: three synthetic datasets (\textsc{LowVar}, \textsc{SeqComb-UV}, \textsc{SeqComb-MV}) and three real-world datasets (ECG~\cite{moody2001impact}, PAM~\cite{reiss2012introducing}, Epilepsy~\cite{andrzejak2001indications}). We use the train--test splits from prior work~\cite{queen2023encoding, liu2024timex++} and attack correctly classified test samples. Dataset details are in Appendix~\ref{sec:description_of_datasets}. 
For explanations, we report AUPRC, AUR, and AUP to quantify alignment between the produced importance maps and reference explanations. For classification, we report target-class F1 score and targeted Attack Success Rate (ASR), where ASR is the fraction of attacked samples whose predicted label equals the randomly chosen target label $y^{\prime}$. Full metric definitions are provided in Appendix~\ref{sec:metrics}.

\begin{table*}[t]
\caption{Attack performance on four benchmark datasets. ASR measures the attack success rate on the target predictions, while F1 denotes the F1 score for the target class. AUPRC, AUP, and AUR quantify attack effectiveness on the explanations.} 
\label{tab:attack_all_datasets_full}
\vskip -0.5em
\centering
\begin{small}
\begin{sc}
\setlength{\tabcolsep}{3.5pt} 
\renewcommand{\arraystretch}{1.1} 

\resizebox{0.95\textwidth}{!}{%
\begin{tabular}{ll *{2}{ccccc}}
\toprule
\multicolumn{2}{c}{} & \multicolumn{5}{c}{\textbf{LowVar}} & \multicolumn{5}{c}{\textbf{SeqComb-UV}} \\
\cmidrule(lr){3-7}\cmidrule(lr){8-12}
Interpreter & Attack & F1 $\uparrow$ & ASR $\uparrow$ & AUPRC $\uparrow$ & AUP $\uparrow$ & AUR $\uparrow$ & F1 $\uparrow$ & ASR $\uparrow$ & AUPRC $\uparrow$ & AUP $\uparrow$ & AUR $\uparrow$ \\
\midrule
\multirow{8}{*}{\textsc{TimeX++}}
 & PGD & $0.833 \pm 0.040$ & $0.846 \pm 0.036$ & $0.258 \pm 0.005$ & $0.194 \pm 0.004$ & $0.318 \pm 0.004$ & $0.514 \pm 0.031$ & $0.502 \pm 0.028$ & $0.465 \pm 0.004$ & $0.552 \pm 0.005$ & $0.575 \pm 0.004$ \\
 & BlackTreeS & $0.728 \pm 0.047$ & $0.740 \pm 0.045$ & $0.361 \pm 0.005$ & $0.271 \pm 0.004$ & $0.375 \pm 0.004$ & $0.533 \pm 0.034$ & $0.518 \pm 0.031$ & $0.484 \pm 0.005$ & $0.566 \pm 0.005$ & $0.573 \pm 0.004$ \\
 & SFAttack & $0.768 \pm 0.040$ & $0.781 \pm 0.037$ & $0.336 \pm 0.005$ & $0.242 \pm 0.004$ & $0.380 \pm 0.004$ & $0.474 \pm 0.041$ & $0.460 \pm 0.039$ & $0.488 \pm 0.005$ & $0.567 \pm 0.005$ & $0.574 \pm 0.004$ \\
 & ADV$^2$ & $0.777 \pm 0.039$ & $0.795 \pm 0.033$ & $0.617 \pm 0.005$ & $0.522 \pm 0.005$ & $0.609 \pm 0.004$ & $0.483 \pm 0.038$ & $0.468 \pm 0.037$ & $0.533 \pm 0.005$ & $0.571 \pm 0.005$ & $0.600 \pm 0.004$ \\
 & Random & $0.014 \pm 0.001$ & $0.014 \pm 0.001$ & $0.786 \pm 0.004$ & $0.643 \pm 0.004$ & $0.730 \pm 0.003$ & $0.027 \pm 0.002$ & $0.027 \pm 0.002$ & $0.471 \pm 0.004$ & $0.549 \pm 0.005$ & $0.585 \pm 0.004$ \\
 & Local G & $0.050 \pm 0.006$ & $0.061 \pm 0.008$ & $0.716 \pm 0.005$ & $0.602 \pm 0.004$ & $0.661 \pm 0.003$ & $0.010 \pm 0.001$ & $0.010 \pm 0.001$ & $0.484 \pm 0.005$ & $0.567 \pm 0.005$ & $0.559 \pm 0.004$ \\
 & Global G & $0.051 \pm 0.004$ & $0.062 \pm 0.006$ & $0.718 \pm 0.005$ & $0.604 \pm 0.004$ & $0.664 \pm 0.003$ & $0.010 \pm 0.002$ & $0.010 \pm 0.002$ & $0.484 \pm 0.005$ & $0.567 \pm 0.005$ & $0.559 \pm 0.004$ \\
 & \textbf{TSEF (Ours)} & $\mathbf{0.837 \pm 0.046}$ & $\mathbf{0.848 \pm 0.042}$ & $\mathbf{0.845 \pm 0.004} $ & $\mathbf{0.760 \pm 0.004} $ & $\mathbf{0.800 \pm 0.003} $ & $\mathbf{0.584 \pm 0.037} $ & $\mathbf{0.568 \pm 0.035} $ & $\mathbf{0.577 \pm 0.006} $ & $\mathbf{0.578 \pm 0.005} $ & $\mathbf{0.620 \pm 0.004} $ \\
\midrule
\multirow{8}{*}{\textsc{TimeX}}
 & PGD & $\mathbf{0.833 \pm 0.039}$ & $\mathbf{0.848 \pm 0.034}$ & $0.213 \pm 0.004$ & $0.128 \pm 0.003$ & $0.443 \pm 0.004$ & $0.503 \pm 0.030$ & $0.488 \pm 0.028$ & $0.422 \pm 0.004$ & $0.633 \pm 0.006$ & $0.491 \pm 0.005$ \\
 & BlackTreeS & $0.732 \pm 0.046$ & $0.747 \pm 0.043$ & $0.291 \pm 0.005$ & $0.180 \pm 0.003$ & $0.514 \pm 0.004$ & $0.528 \pm 0.036$ & $0.512 \pm 0.036$ & $0.431 \pm 0.004$ & $0.640 \pm 0.006$ & $0.487 \pm 0.005$ \\
 & SFAttack & $0.759 \pm 0.044$ & $0.770 \pm 0.042$ & $0.309 \pm 0.004$ & $0.182 \pm 0.003$ & $0.564 \pm 0.004$ & $0.478 \pm 0.041$ & $0.470 \pm 0.040$ & $0.428 \pm 0.004$ & $0.638 \pm 0.006$ & $0.486 \pm 0.005$ \\
 & ADV$^2$ & $0.765 \pm 0.047$ & $0.788 \pm 0.039$ & $0.464 \pm 0.005$ & $0.321 \pm 0.004$ & $0.640 \pm 0.004$ & $0.493 \pm 0.031$ & $0.478 \pm 0.031$ & $0.493 \pm 0.005$ & $0.645 \pm 0.006$ & $0.524 \pm 0.004$ \\
 & Random & $0.011 \pm 0.001$ & $0.011 \pm 0.001$ & $0.632 \pm 0.005$ & $0.441 \pm 0.004$ & $\mathbf{0.818 \pm 0.003}$ & $0.025 \pm 0.002$ & $0.025 \pm 0.002$ & $0.433 \pm 0.004$ & $0.632 \pm 0.006$ & $0.511 \pm 0.005$ \\
 & Local G & $0.046 \pm 0.009$ & $0.058 \pm 0.013$ & $0.634 \pm 0.005$ & $0.492 \pm 0.004$ & $0.730 \pm 0.003$ & $0.013 \pm 0.002$ & $0.013 \pm 0.002$ & $0.427 \pm 0.004$ & $0.634 \pm 0.006$ & $0.463 \pm 0.005$ \\
 & Global G & $0.046 \pm 0.010$ & $0.056 \pm 0.012$ & $0.632 \pm 0.005$ & $0.490 \pm 0.004$ & $0.730 \pm 0.003$ & $0.017 \pm 0.002$ & $0.017 \pm 0.002$ & $0.427 \pm 0.004$ & $0.633 \pm 0.006$ & $0.463 \pm 0.005$ \\
 & \textbf{TSEF (Ours)} & $0.818 \pm 0.055$ & $0.835 \pm 0.048$ & $\mathbf{0.751 \pm 0.005} $ & $\mathbf{0.597 \pm 0.004} $ & $0.811 \pm 0.003$ & $\mathbf{0.544 \pm 0.036} $ & $\mathbf{0.525 \pm 0.034} $ & $\mathbf{0.543 \pm 0.005} $ & $\mathbf{0.649 \pm 0.006} $ & $\mathbf{0.552 \pm 0.004} $ \\
\midrule
\multirow{8}{*}{\makecell[l]{Integrated\\Gradients}}
 & PGD & $0.825 \pm 0.045$ & $0.843 \pm 0.039$ & $0.191 \pm 0.004$ & $0.097 \pm 0.003$ & $0.569 \pm 0.003$ & $0.502 \pm 0.037$ & $0.487 \pm 0.036$ & $0.304 \pm 0.003$ & $0.435 \pm 0.005$ & $0.611 \pm 0.004$ \\
 & BlackTreeS & $0.725 \pm 0.048$ & $0.737 \pm 0.045$ & $0.258 \pm 0.005$ & $0.146 \pm 0.003$ & $0.561 \pm 0.003$ & $0.508 \pm 0.038$ & $0.493 \pm 0.037$ & $0.330 \pm 0.004$ & $0.467 \pm 0.005$ & $0.589 \pm 0.004$ \\
 & SFAttack & $0.759 \pm 0.039$ & $0.774 \pm 0.036$ & $0.239 \pm 0.005$ & $0.123 \pm 0.003$ & $0.566 \pm 0.003$ & $0.458 \pm 0.038$ & $0.446 \pm 0.035$ & $0.317 \pm 0.003$ & $0.440 \pm 0.005$ & $0.611 \pm 0.004$ \\
 & ADV$^2$ & $0.821 \pm 0.044$ & $0.838 \pm 0.039$ & $0.417 \pm 0.007$ & $0.250 \pm 0.004$ & $0.642 \pm 0.003$ & $0.490 \pm 0.035$ & $0.476 \pm 0.033$ & $0.374 \pm 0.004$ & $0.438 \pm 0.005$ & $0.625 \pm 0.004$ \\
 & Random & $0.014 \pm 0.003$ & $0.014 \pm 0.003$ & $\mathbf{0.695 \pm 0.005} $ & $\mathbf{0.468 \pm 0.004} $ & $0.645 \pm 0.002$ & $0.023 \pm 0.003$ & $0.024 \pm 0.003$ & $0.336 \pm 0.003$ & $\mathbf{0.539 \pm 0.005} $ & $0.516 \pm 0.005$ \\
 & Local G & $0.054 \pm 0.004$ & $0.067 \pm 0.007$ & $0.367 \pm 0.006$ & $0.222 \pm 0.004$ & $0.581 \pm 0.002$ & $0.032 \pm 0.003$ & $0.032 \pm 0.003$ & $0.333 \pm 0.004$ & $0.482 \pm 0.005$ & $0.574 \pm 0.005$ \\
 & Global G & $0.060 \pm 0.005$ & $0.073 \pm 0.007$ & $0.365 \pm 0.006$ & $0.221 \pm 0.004$ & $0.582 \pm 0.002$ & $0.039 \pm 0.003$ & $0.039 \pm 0.003$ & $0.333 \pm 0.004$ & $0.482 \pm 0.005$ & $0.574 \pm 0.005$ \\
 & \textbf{TSEF (Ours)} & $\mathbf{0.843 \pm 0.030}$ & $\mathbf{0.852 \pm 0.027}$ & $0.545 \pm 0.007$ & $0.341 \pm 0.005$ & $\mathbf{0.669 \pm 0.003} $ & $\mathbf{0.564 \pm 0.033} $ & $\mathbf{0.546 \pm 0.031} $ & $\mathbf{0.437 \pm 0.004} $ & $0.444 \pm 0.005$ & $\mathbf{0.625 \pm 0.004}$ \\

\midrule
\midrule 

\multicolumn{2}{c}{} & \multicolumn{5}{c}{\textbf{SeqComb-MV}} & \multicolumn{5}{c}{\textbf{ECG}} \\
\cmidrule(lr){3-7}\cmidrule(lr){8-12}
Interpreter & Attack & F1 $\uparrow$ & ASR $\uparrow$ & AUPRC $\uparrow$ & AUP $\uparrow$ & AUR $\uparrow$ & F1 $\uparrow$ & ASR $\uparrow$ & AUPRC $\uparrow$ & AUP $\uparrow$ & AUR $\uparrow$ \\
\midrule
\multirow{8}{*}{\textsc{TimeX++}}
 & PGD & $0.752 \pm 0.021$ & $0.745 \pm 0.023$ & $0.484 \pm 0.004$ & $0.619 \pm 0.005$ & $0.527 \pm 0.004$ & $0.902 \pm 0.018$ & $0.946 \pm 0.011$ & $0.639 \pm 0.001$ & $0.715 \pm 0.001$ & $0.431 \pm 0.001$ \\
 & BlackTreeS & $0.771 \pm 0.021$ & $0.765 \pm 0.022$ & $0.524 \pm 0.005$ & $0.639 \pm 0.005$ & $0.532 \pm 0.004$ & $0.882 \pm 0.023$ & $0.934 \pm 0.015$ & $0.650 \pm 0.001$ & $0.737 \pm 0.001$ & $0.427 \pm 0.001$ \\
 & SFAttack & $0.709 \pm 0.030$ & $0.698 \pm 0.032$ & $0.462 \pm 0.004$ & $0.606 \pm 0.005$ & $0.522 \pm 0.004$ & $0.892 \pm 0.019$ & $0.941 \pm 0.011$ & $0.647 \pm 0.001$ & $0.721 \pm 0.001$ & $0.432 \pm 0.001$ \\
 & ADV$^2$ & $0.742 \pm 0.019$ & $0.735 \pm 0.022$ & $0.630 \pm 0.006$ & $\mathbf{0.645 \pm 0.006}$ & $0.607 \pm 0.004$ & $0.887 \pm 0.019$ & $0.935 \pm 0.013$ & $0.705 \pm 0.001$ & $0.773 \pm 0.001$ & $0.448 \pm 0.001$ \\
 & Random & $0.065 \pm 0.015$ & $0.090 \pm 0.023$ & $0.452 \pm 0.004$ & $0.571 \pm 0.005$ & $0.546 \pm 0.004$ & $0.090 \pm 0.009$ & $0.093 \pm 0.010$ & $0.636 \pm 0.001$ & $0.724 \pm 0.001$ & $0.427 \pm 0.001$ \\
 & Local G & $0.015 \pm 0.002$ & $0.015 \pm 0.002$ & $0.428 \pm 0.004$ & $0.641 \pm 0.005$ & $0.479 \pm 0.005$ & $0.053 \pm 0.004$ & $0.054 \pm 0.004$ & $0.652 \pm 0.001$ & $0.742 \pm 0.001$ & $0.419 \pm 0.001$ \\
 & Global G & $0.015 \pm 0.002$ & $0.015 \pm 0.002$ & $0.427 \pm 0.004$ & $0.640 \pm 0.005$ & $0.479 \pm 0.005$ & $0.053 \pm 0.006$ & $0.054 \pm 0.005$ & $0.657 \pm 0.001$ & $0.749 \pm 0.001$ & $0.422 \pm 0.001$ \\
 & \textbf{TSEF (Ours)} & $\mathbf{0.795 \pm 0.012} $ & $\mathbf{0.789 \pm 0.015} $ & $\mathbf{0.661 \pm 0.006} $ & $0.641 \pm 0.005$ & $\mathbf{0.631 \pm 0.003} $ & $\mathbf{0.911 \pm 0.017}$ & $\mathbf{0.951 \pm 0.010}$ & $\mathbf{0.713 \pm 0.001} $ & $\mathbf{0.776 \pm 0.001} $ & $\mathbf{0.450 \pm 0.001} $ \\
\midrule
\multirow{8}{*}{\textsc{TimeX}}
 & PGD & $0.739 \pm 0.018$ & $0.731 \pm 0.020$ & $0.154 \pm 0.002$ & $0.263 \pm 0.004$ & $0.527 \pm 0.004$ & $\mathbf{0.902 \pm 0.018}$ & $\mathbf{0.946 \pm 0.011}$ & $0.320 \pm 0.001$ & $0.345 \pm 0.001$ & $\mathbf{0.475 \pm 0.001}$ \\
 & BlackTreeS & $0.771 \pm 0.023$ & $0.764 \pm 0.025$ & $0.164 \pm 0.002$ & $0.280 \pm 0.005$ & $0.527 \pm 0.004$ & $0.882 \pm 0.023$ & $0.934 \pm 0.015$ & $0.353 \pm 0.001$ & $0.376 \pm 0.001$ & $0.468 \pm 0.001$ \\
 & SFAttack & $0.703 \pm 0.033$ & $0.692 \pm 0.036$ & $0.149 \pm 0.002$ & $0.262 \pm 0.004$ & $0.524 \pm 0.004$ & $0.892 \pm 0.019$ & $0.941 \pm 0.011$ & $0.327 \pm 0.001$ & $0.361 \pm 0.001$ & $0.463 \pm 0.001$ \\
 & ADV$^2$ & $0.744 \pm 0.019$ & $0.736 \pm 0.021$ & $0.221 \pm 0.003$ & $0.292 \pm 0.005$ & $0.567 \pm 0.004$ & $0.887 \pm 0.015$ & $0.937 \pm 0.010$ & $0.396 \pm 0.001$ & $0.424 \pm 0.001$ & $0.456 \pm 0.001$ \\
 & Random & $0.068 \pm 0.013$ & $0.089 \pm 0.020$ & $0.144 \pm 0.002$ & $0.231 \pm 0.004$ & $0.533 \pm 0.004$ & $0.090 \pm 0.009$ & $0.092 \pm 0.010$ & $0.317 \pm 0.001$ & $0.345 \pm 0.001$ & $0.471 \pm 0.001$ \\
 & Local G & $0.009 \pm 0.004$ & $0.009 \pm 0.004$ & $0.138 \pm 0.002$ & $0.271 \pm 0.004$ & $0.513 \pm 0.004$ & $0.062 \pm 0.007$ & $0.063 \pm 0.007$ & $0.347 \pm 0.001$ & $0.380 \pm 0.001$ & $0.451 \pm 0.001$ \\
 & Global G & $0.007 \pm 0.003$ & $0.008 \pm 0.003$ & $0.137 \pm 0.002$ & $0.270 \pm 0.004$ & $0.513 \pm 0.004$ & $0.062 \pm 0.008$ & $0.062 \pm 0.008$ & $0.333 \pm 0.001$ & $0.357 \pm 0.001$ & $0.452 \pm 0.001$ \\
 & \textbf{TSEF (Ours)} & $\mathbf{0.803 \pm 0.013} $ & $\mathbf{0.796 \pm 0.015} $ & $\mathbf{0.235 \pm 0.003} $ & $\mathbf{0.295 \pm 0.005}$ & $\mathbf{0.576 \pm 0.004} $ & $0.899 \pm 0.017$ & $0.943 \pm 0.011$ & $\mathbf{0.419 \pm 0.001} $ & $\mathbf{0.446 \pm 0.001} $ & $0.468 \pm 0.001$ \\
\midrule
\multirow{8}{*}{\makecell[l]{Integrated\\Gradients}}
 & PGD & $0.734 \pm 0.021$ & $0.726 \pm 0.024$ & $0.183 \pm 0.002$ & $0.349 \pm 0.004$ & $0.607 \pm 0.004$ & $0.902 \pm 0.018$ & $0.946 \pm 0.011$ & $0.199 \pm 0.000$ & $0.247 \pm 0.001$ & $0.697 \pm 0.001$ \\
 & BlackTreeS & $0.766 \pm 0.022$ & $0.759 \pm 0.024$ & $0.192 \pm 0.002$ & $0.368 \pm 0.005$ & $0.592 \pm 0.004$ & $0.871 \pm 0.023$ & $0.925 \pm 0.016$ & $0.329 \pm 0.001$ & $\mathbf{0.347 \pm 0.001}$ & $0.621 \pm 0.001$ \\
 & SFAttack & $0.708 \pm 0.028$ & $0.698 \pm 0.030$ & $0.162 \pm 0.002$ & $0.319 \pm 0.004$ & $0.606 \pm 0.004$ & $0.892 \pm 0.019$ & $0.941 \pm 0.011$ & $0.199 \pm 0.000$ & $0.241 \pm 0.001$ & $0.701 \pm 0.001$ \\
 & ADV$^2$ & $0.741 \pm 0.018$ & $0.733 \pm 0.020$ & $0.281 \pm 0.003$ & $0.363 \pm 0.004$ & $\mathbf{0.641 \pm 0.004}$ & $0.902 \pm 0.020$ & $0.946 \pm 0.012$ & $0.276 \pm 0.001$ & $0.288 \pm 0.001$ & $\mathbf{0.715 \pm 0.001} $ \\
 & Random & $0.069 \pm 0.012$ & $0.093 \pm 0.020$ & $0.174 \pm 0.002$ & $0.375 \pm 0.004$ & $0.541 \pm 0.005$ & $0.090 \pm 0.009$ & $0.093 \pm 0.010$ & $0.197 \pm 0.000$ & $0.264 \pm 0.001$ & $0.559 \pm 0.001$ \\
 & Local G & $0.018 \pm 0.004$ & $0.018 \pm 0.004$ & $0.170 \pm 0.002$ & $0.317 \pm 0.004$ & $0.624 \pm 0.004$ & $0.103 \pm 0.016$ & $0.104 \pm 0.016$ & $0.227 \pm 0.001$ & $0.297 \pm 0.001$ & $0.553 \pm 0.001$ \\
 & Global G & $0.019 \pm 0.003$ & $0.019 \pm 0.004$ & $0.169 \pm 0.002$ & $0.316 \pm 0.004$ & $0.621 \pm 0.004$ & $0.119 \pm 0.015$ & $0.120 \pm 0.014$ & $0.218 \pm 0.000$ & $0.275 \pm 0.001$ & $0.570 \pm 0.001$ \\
 & \textbf{TSEF (Ours)} & $\mathbf{0.789 \pm 0.012} $ & $\mathbf{0.781 \pm 0.014} $ & $\mathbf{0.334 \pm 0.004} $ & $\mathbf{0.376 \pm 0.004} $ & $0.638 \pm 0.004$ & $\mathbf{0.909 \pm 0.023}$ & $\mathbf{0.949 \pm 0.014}$ & $\mathbf{0.354 \pm 0.001} $ & $0.321 \pm 0.001$ & $0.644 \pm 0.001$ \\
\bottomrule
\end{tabular}
}
\vskip -0.7em
\end{sc}
\end{small}
\end{table*}

\textbf{Threat models}: \textit{(1) Time series interpreters.}
We adopt \textsc{TimeX++}~\citep{liu2024timex++}, \textsc{TimeX}~\citep{queen2023encoding}, and Integrated Gradients (IG)~\cite{sundararajan2017axiomatic} as the time series explanation methods under attack.
\textit{(2) Time series classifier.}
We use a transformer-based model~\cite{vaswani2017attention} as the time series classifier for all datasets.
\textbf{Baseline attacks and settings.}
We compare against local/global Gaussian perturbations~\citep{nguyen2024robust}, Random Sign Attack, BlackTreeS~\citep{ding2023black}, SFAttack~\citep{gu2025towards}, PGD~\citep{madry2017towards}, and ADV$^2$~\citep{zhang2020interpretable}, a representative joint attack originally designed for image classifiers and explainers. We adapt baselines to the targeted setting by optimizing a targeted classification  toward $y^{\prime}$; To adapt ADV$^2$ to time series, we replace its interpretation loss with a time series explanation-alignment term toward $\mathbf{A}^{\prime}$, so that all methods are evaluated under the same targeted objectives as \proposedmodel{}.
All attacks use an $\ell_\infty$ budget $\epsilon=0.1$. To handle heterogeneous signal scales, we use an instance-wise bound $\epsilon'=\epsilon \cdot (\max(\mathbf{X})-\min(\mathbf{X}))$ and enforce $\|\mathbf{X} - \tilde{\mathbf{X}}\|_\infty\le \epsilon'$ via projection at each step (Alg.~\ref{alg:training}). Hyperparameters are tuned on validation sets; full settings are in Appendix~\ref{sec:description_of_parameters}.




\textbf{Target objectives.}
For each correctly classified test instance with label $y$, we sample $y^{\prime}\neq y$ uniformly from the remaining classes and optimize a dual-target objective that (i) drives the predictor toward $y^{\prime}$ and (ii) forces the explanation to match a reference explanation mask $\mathbf{A}^{\prime}$. When temporal annotations are available, $\mathbf{A}^{\prime}$ is the ground-truth mask; otherwise, we set $\mathbf{A}^{\prime}$ to the top-$k\%$ time steps from the \emph{clean} explanation of the same interpreter.

\textbf{Additional analyses.}
Additional experiments on surrogate transferability, alternative classifier backbones, the original-explanation-preserving setting, hyperparameter sensitivity, and additional ablations are reported in Appendix~\ref{app:additional_results}.


\subsection{Attack Effectiveness on Predictions/Explanations}
\label{sec:exp_main_results}
\textbf{Threat model: dual-target attack for auditing.}
We study a realistic failure mode of ITSDLS: an adversary forces the prediction to a target label $y^{\prime}$ while keeping the explanation aligned with the original-class rationale $\mathbf{A}_{\mathrm{orig}}$.
A successful attack satisfies $f(\tilde{\mathbf{X}})=y^{\prime}$ and preserves alignment to $\mathbf{A}_{\mathrm{orig}}$; we report ASR/F1 and explanation alignment (AUPRC, AUP, AUR).
We compare against prediction-only attacks (PGD, BlackTreeS, SFAttack), a dual-objective baseline (ADV$^2$), and sanity-check perturbations (Random, Local/Global Gaussian).
Table~\ref{tab:attack_all_datasets_full} summarizes results on four benchmarks and three interpreters. Our goal is not to crown the strongest prediction attack, but to stress-test a common auditing premise: \emph{stable, plausible explanations imply a stable, trustworthy decision.}


\noindent\textbf{(1) Explanation-preserving attack is feasible without sacrificing targeted prediction.} Across all datasets, \proposedmodel{} maintains strong targeting success \emph{while enforcing explanation constraints}. For instance, on \textsc{LowVar} with \textsc{TimeX++}, \proposedmodel{} attains $0.837/0.848$ (F1/ASR), comparable to PGD ($0.833/0.846$).
Similar trends hold on \textsc{SeqComb-MV}, and on \textsc{ECG} the attack keeps ASR near the strongest baselines (\textsc{TimeX++} ASR $0.951$ vs.\ $0.946$ for PGD). These results establish the \emph{existence} of explanation-preserving attacks: explanation constraints do not inherently prevent targeted prediction manipulation.


\noindent\textbf{(2)  Vulnerability emerges in explanation stability.}
While several baselines appear effective in ASR, their success is superficial: explanations typically drift significantly from $\mathbf{A}_{\text{orig}}$, acting as a warning signal for auditors. \proposedmodel{} exposes the true robustness gap by closing this loop. For instance, on \textsc{LowVar} with \textsc{TimeX++}, PGD reaches $0.846$ ASR but yields only $0.258$ AUPRC, whereas \proposedmodel{} preserves both high ASR ($0.848$) and high alignment (AUPRC $0.845$; AUP $0.760$; AUR $0.800$). On \textsc{SeqComb-MV}, \proposedmodel{} increases \textsc{TimeX++} AUPRC from $0.484$ (PGD) to $0.661$ at similar ASR.
Even on \textsc{ECG}, where prediction targeting is already easy for many attacks, \proposedmodel{} still raises the explanation metrics under \textsc{TimeX++}.
Crucially, these gains do not aim to be best overall on every scalar metric; they demonstrate a more concerning phenomenon:
\emph{the system can be driven to a wrong decision while the explanation remains consistent enough to pass an explanation-based audit.}


\noindent\textbf{(3) System-level view: A failure of coupled robustness.}
Table~\ref{tab:attack_all_datasets_full} reveals a trade-off for existing attacks:
(i) prediction-only methods achieve high ASR but low explanation alignment,
(ii) random/noise perturbations can yield non-trivial explanation scores yet rarely reach $y^{\prime}$, and (iii) joint optimization (ADV$^2$) partially reduces explanation drift yet remains behind \proposedmodel{} on alignment at comparable targeting. The key takeaway is that robustness in ITSDLS must be evaluated in the \emph{coupled predictor--explainer space}. An auditor relying on explanation stability as a guardrail against manipulation can be systematically misled.

\begin{table}[t]
\caption{
Ablation study on the SeqComb-MV dataset. 
}
\vskip -0.7em
\label{tab:ablation_seqcombmv_formatted}
\centering
\begin{small}
\begin{sc}
\setlength{\tabcolsep}{3.5pt}
\renewcommand{\arraystretch}{1.1}

\resizebox{\linewidth}{!}{%
\begin{tabular}{llccccc}
\toprule
Interpreter & Attack & F1 $\uparrow$ & ASR $\uparrow$ & AUPRC $\uparrow$ & AUP $\uparrow$ & AUR $\uparrow$ \\
\midrule
\multirow{3}{*}{\textsc{TimeX++}}
 & TSEF (w/o $\mathbf{M}_t$) & $0.508 \pm 0.020$ & $0.491 \pm 0.021$ & $0.551 \pm 0.005$ & $0.621 \pm 0.005$ & $0.578 \pm 0.004$ \\
 & TSEF (w/o $\mathbf{M}_f$) & $0.686 \pm 0.016$ & $0.678 \pm 0.018$ & $0.632 \pm 0.006$ & $0.639 \pm 0.005$ & $0.614 \pm 0.003$ \\
 & \textbf{TSEF (Full)} & $\mathbf{0.795 \pm 0.012}$ & $\mathbf{0.789 \pm 0.015}$ & $\mathbf{0.661 \pm 0.006}$ & $\mathbf{0.641 \pm 0.005}$ & $\mathbf{0.631 \pm 0.003}$ \\
\midrule
\multirow{3}{*}{\makecell[l]{Integrated\\Gradients}}
 & TSEF (w/o $\mathbf{M}_t$) & $0.559 \pm 0.027$ & $0.544 \pm 0.027$ & $0.247 \pm 0.003$ & $0.353 \pm 0.004$ & $0.627 \pm 0.004$ \\
 & TSEF (w/o $\mathbf{M}_f$) & $0.672 \pm 0.030$ & $0.660 \pm 0.033$ & $0.253 \pm 0.003$ & $0.368 \pm 0.004$ & $0.626 \pm 0.004$ \\
 & \textbf{TSEF (Full)} & $\mathbf{0.789 \pm 0.012}$ & $\mathbf{0.781 \pm 0.014}$ & $\mathbf{0.334 \pm 0.004}$ & $\mathbf{0.376 \pm 0.004}$ & $\mathbf{0.638 \pm 0.004}$ \\
\bottomrule
\end{tabular}%
}
\end{sc}
\vskip -0.7em
\end{small}
\end{table}

\begin{table*}[t]
\caption{
Attack performance on the PAM and Epilepsy datasets. Best results are highlighted in bold. 
}
\vskip -0.7em
\label{tab:attack_pam_epilepsy_formatted}
\centering
\begin{small}
\begin{sc}
\setlength{\tabcolsep}{3.5pt}
\renewcommand{\arraystretch}{1.1}

\resizebox{0.95\textwidth}{!}{%
\begin{tabular}{ll *{2}{ccccc}}
\toprule
\multicolumn{2}{c}{} & \multicolumn{5}{c}{\textbf{PAM}} & \multicolumn{5}{c}{\textbf{Epilepsy}} \\
\cmidrule(lr){3-7}\cmidrule(lr){8-12}
Interpreter & Attack & F1 $\uparrow$ & ASR $\uparrow$ & AUPRC $\uparrow$ & AUP $\uparrow$ & AUR $\uparrow$ & F1 $\uparrow$ & ASR $\uparrow$ & AUPRC $\uparrow$ & AUP $\uparrow$ & AUR $\uparrow$ \\
\midrule
\multirow{3}{*}{\textsc{TimeX++}}
 & ADV$^2$ & $0.594 \pm 0.016$ & $0.585 \pm 0.017$ & $0.820 \pm 0.004$ & $0.702 \pm 0.003$ & $0.474 \pm 0.003$ & $0.159 \pm 0.011$ & $0.190 \pm 0.016$ & $0.837 \pm 0.005$ & $0.381 \pm 0.004$ & $0.894 \pm 0.002$ \\
 & PGD & $0.604 \pm 0.019$ & $0.597 \pm 0.019$ & $0.696 \pm 0.006$ & $0.641 \pm 0.004$ & $0.446 \pm 0.004$ & $0.160 \pm 0.011$ & $0.191 \pm 0.016$ & $0.658 \pm 0.006$ & $0.345 \pm 0.004$ & $0.864 \pm 0.002$ \\
 & \textbf{TSEF (Ours)} & $\mathbf{0.624 \pm 0.015} $ & $\mathbf{0.614 \pm 0.015} $ & $\mathbf{0.876 \pm 0.003} $ & $\mathbf{0.729 \pm 0.003} $ & $\mathbf{0.490 \pm 0.003} $ & $\mathbf{0.165 \pm 0.012}$ & $\mathbf{0.199 \pm 0.018}$ & $\mathbf{0.912 \pm 0.005} $ & $\mathbf{0.422 \pm 0.005} $ & $\mathbf{0.907 \pm 0.002} $ \\
\midrule
\multirow{3}{*}{\textsc{TimeX}}
 & ADV$^2$ & $0.579 \pm 0.018$ & $0.575 \pm 0.020$ & $0.743 \pm 0.005$ & $0.760 \pm 0.003$ & $0.364 \pm 0.003$ & $0.146 \pm 0.010$ & $0.171 \pm 0.014$ & $0.922 \pm 0.003$ & $0.583 \pm 0.003$ & $0.841 \pm 0.002$ \\
 & PGD & $0.600 \pm 0.018$ & $0.589 \pm 0.018$ & $0.563 \pm 0.006$ & $0.674 \pm 0.004$ & $0.318 \pm 0.003$ & $\mathbf{0.160 \pm 0.011}$ & $\mathbf{0.191 \pm 0.016}$ & $0.876 \pm 0.003$ & $0.550 \pm 0.003$ & $0.806 \pm 0.002$ \\
 & \textbf{TSEF (Ours)} & $\mathbf{0.611 \pm 0.018} $ & $\mathbf{0.599 \pm 0.018} $ & $\mathbf{0.836 \pm 0.003} $ & $\mathbf{0.798 \pm 0.002} $ & $\mathbf{0.395 \pm 0.002} $ & $0.158 \pm 0.016$ & $0.190 \pm 0.023$ & $\mathbf{0.981 \pm 0.002} $ & $\mathbf{0.652 \pm 0.003} $ & $\mathbf{0.859 \pm 0.002} $ \\
\midrule
\multirow{3}{*}{\makecell[l]{Integrated\\Gradients}}
 & ADV$^2$ & $0.600 \pm 0.018$ & $0.591 \pm 0.017$ & $0.681 \pm 0.005$ & $0.583 \pm 0.004$ & $\mathbf{0.464 \pm 0.004}$ & $\mathbf{0.160 \pm 0.011}$ & $\mathbf{0.192 \pm 0.016}$ & $0.629 \pm 0.007$ & $0.491 \pm 0.007$ & $0.565 \pm 0.005$ \\
 & PGD & $0.592 \pm 0.015$ & $0.585 \pm 0.016$ & $0.646 \pm 0.005$ & $0.576 \pm 0.004$ & $0.461 \pm 0.004$ & $0.160 \pm 0.011$ & $0.191 \pm 0.016$ & $0.623 \pm 0.007$ & $0.489 \pm 0.007$ & $0.563 \pm 0.005$ \\
 & \textbf{TSEF (Ours)} & $\mathbf{0.603 \pm 0.021} $ & $\mathbf{0.594 \pm 0.023} $ & $\mathbf{0.713 \pm 0.004} $ & $\mathbf{0.604 \pm 0.004} $ & $0.454 \pm 0.004$ & $0.160 \pm 0.016$ & $0.192 \pm 0.022$ & $\mathbf{0.663 \pm 0.007} $ & $\mathbf{0.515 \pm 0.008} $ & $\mathbf{0.569 \pm 0.005} $ \\
\bottomrule
\end{tabular}%
}
\vskip -0.8em
\end{sc}
\end{small}
\end{table*}

\begin{figure*}[t]
    \centering
    \includegraphics[width=0.9\linewidth]{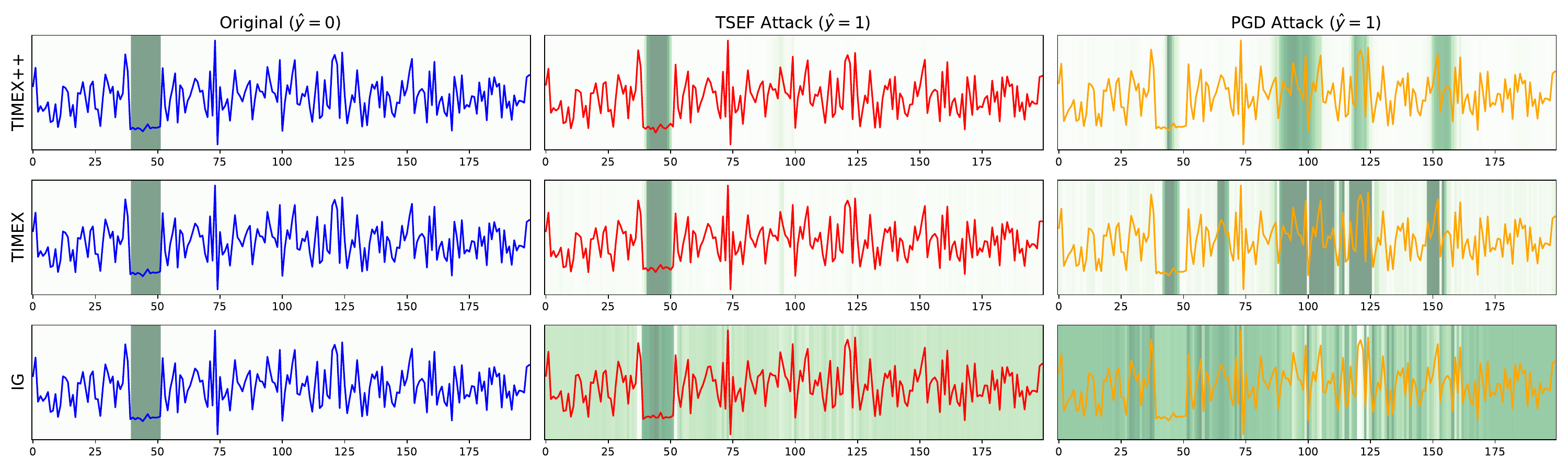}
    \vskip -0.7em
    \caption{\textbf{Targeted manipulation of prediction and explanation on a \textsc{LowVar} sample.}  
    Rows correspond to \textsc{\textsc{TimeX++}}, \textsc{TimeX}, and Integrated Gradients (IG).
    \textbf{Left:} Clean input with prediction $\hat{y}=0$; the dark green window marks the reference explanation region.
    \textbf{Middle:} \proposedmodel{} flips the prediction to the target class ($\hat{y}=1$) while keeping the explanation concentrated on the same target region.
    \textbf{Right:} Standard PGD also attains $\hat{y}=1$ but produces diffuse, fragmented attributions spread across time.
    }
    \label{fig:lowvar_vis}
\end{figure*}

\subsection{Attack Effectiveness on Different Interpreters}
We further investigate whether this cover-up vulnerability is specific to an explanation family. By varying the explanation mechanism (\textsc{TimeX}/\textsc{TimeX++} vs.\ Integrated Gradients), we find the vulnerability is systemic.

\textbf{Mask-based explainers (\textsc{TimeX}/\textsc{TimeX++}).}
These methods use explicit regularization (smoothness, information bottleneck) to stabilize masks. However, our results on \textsc{ECG} show that \proposedmodel{} improves AUPRC from $0.320$ (PGD) to $0.419$ (\textsc{TimeX}) while maintaining $94\%$ ASR. Under \textsc{TimeX++}, \proposedmodel{} simultaneously achieves high ASR ($95\%$) and strong alignment (AUPRC $0.713$), indicating that regularized mask explanations can still be \emph{steered} without revealing obvious drift. \textbf{Gradient-based explainers (Integrated Gradients).}
IG is generally harder to control due to path integration and baseline sensitivity~\citep{sundararajan2017axiomatic, sturmfels2020visualizing}, and we indeed observe lower absolute alignment scores than mask-based explainers. However, \proposedmodel{} still demonstrates the capacity to decouple prediction from explanation: on \textsc{SeqComb-MV} (IG), it improves AUPRC/AUP to $0.334/0.376$ compared to $0.183/0.349$ (PGD), while keeping strong ASR.
This confirms that the deceptive alignment risk generalizes beyond mask-based explanation to gradient-based attribution. Across four benchmarks and three interpreters, \proposedmodel{} reveals a concrete failure mode: \emph{predictions can be compromised while explanations remain stable and original-looking.} This highlights the need for \textbf{future explainer development to explicitly evaluate adversarial robustness}. 

\subsection{Vulnerability in Real-World Deployments}
\label{sec:real_app}
Our benchmark results reveal a system-level failure mode. We now test whether this vulnerability persists in real deployments where ground-truth rationales are unavailable and the clean explanation serves as the auditing reference. We define the reference rationale $\mathbf{A}^{\prime}$ as the interpreter's top-$10\%$ salient features on the clean input, and optimize the dual-target objective to preserve this reference while forcing $y^{\prime}$.
Table~\ref{tab:attack_pam_epilepsy_formatted} shows that \proposedmodel{} achieves a stronger prediction–explanation decoupling than PGD and ADV$^2$ on both datasets, maintaining comparable targeting while improving explanation alignment across interpreters.
For example, under \textsc{TimeX++} on PAM, \proposedmodel{} attains ASR $0.614$ while increasing AUPRC to $0.876$ (vs.\ $0.585/0.820$ for ADV$^2$ and $0.597/0.696$ for PGD).
On Epilepsy, it similarly strengthens preservation (e.g., \textsc{TimeX} AUPRC $0.981$ vs.\ $0.922/0.876$, and IG AUPRC $0.663$ vs.\ $0.629/0.623$).
\textbf{Implication for explanation-based auditing:}
In practice, explanation drift is treated as an alarm: predictions are trusted when explanations remain consistent with a pre-attack reference.
Our results challenge this premise: targeted misclassification can be induced while the explanation stays highly consistent with the reference, allowing an attacker to evade monitoring.

\subsection{Ablation Study}
\label{sec:ablation}
We ablate TSEF's temporal vulnerability mask $\mathbf{M}_t$ and frequency perturbation filter $\mathbf{M}_f$ via \textbf{w/o $\mathbf{M}_t$} (global spectral update) and \textbf{w/o $\mathbf{M}_f$} (localized PGD update).
Table~\ref{tab:ablation_seqcombmv_formatted} shows that the full method achieves the best \emph{joint} trade-off on \textsc{SeqComb-MV} across interpreters, combining strong targeting (F1/ASR) with improved explanation alignment (AUPRC/AUP/AUR).
Removing $\mathbf{M}_t$ substantially hurts targeting (e.g., \textsc{SeqComb-MV}+\textsc{TimeX++}: ASR $0.789 \rightarrow 0.491$), highlighting the need for temporal localization under $\ell_\infty$.
Removing $\mathbf{M}_f$ also degrades performance (e.g., \textsc{SeqComb-MV}+\textsc{IG}: AUPRC $0.334 \rightarrow 0.253$).
Overall, both $\mathbf{M}_t$ and $\mathbf{M}_f$ are necessary to satisfy the dual objectives. Similar observations are shown in Appendix~\ref{app:ablation}.

\subsection{Visualization}
Figure~\ref{fig:lowvar_vis} corroborates our quantitative findings.
We visualize the setting, where the attacker sets the \emph{reference explanation} to be the
\emph{clean rationale} (green window).
The middle column shows that \proposedmodel{} flips the prediction to the target label ($\hat{y}=1$)
\emph{while keeping the explanation aligned with the reference window}, yielding a sharp and concentrated heatmap.
This visual consistency explains the high AUPRC in Table~\ref{tab:attack_all_datasets_full}, as the attack recovers the reference rationale with high precision.
In contrast, PGD attains the same target prediction but produces diffuse attributions across time.
Such dispersion may still overlap the window,
yet it severely degrades precision and introduces conspicuous artifacts that undermine the cover-up under explanation-based auditing.
Overall, \proposedmodel{} achieves \emph{dual-target control} over \emph{what} the model predicts and \emph{why} it appears to predict it.

\section{Conclusion}
This work exposes a systemic vulnerability in interpretable time series systems: the decoupling between prediction and explanation enables attackers to alter the former while camouflaging the latter. Through \proposedmodel{}, we show that explanation stability is a misleading proxy for robustness; thus, auditing protocols must move beyond simple consistency checks. Future research should develop coupling-aware defenses and audits that explicitly constrain predictor--explainer behavior, and extend evaluations to realistic, constrained threat models to restore trust in high-stakes settings.

\noindent \textbf{Limitations and scope.}
TSEF is primarily a white-box stress test of coupled predictor--explainer robustness and requires gradients of both the classifier and the explainer. Therefore, the current formulation directly applies to differentiable explainers, while perturbation-, sampling-, or surrogate-based explainers require different optimization paradigms, such as surrogate-transfer or query-based attacks. Our transfer experiments in Appendix~\ref{app:additional_results} provide preliminary unseen-target evidence, but fully black-box joint attacks and defenses remain important future directions.

\section*{Acknowledgement}
This research was supported by the U.S. National Science Foundation under Award Nos. 2504088, 2437345, 2302968, and 2124104, the National Institute of Health under Award Nos. R01ES033241 and R01LM013712.

\section*{Impact Statement}
This paper presents work whose goal is to advance the field of Machine
Learning. There are many potential societal consequences of our work, none
which we feel must be specifically highlighted here.

\bibliography{example_paper}
\bibliographystyle{icml2026}

\newpage
\appendix
\onecolumn


\section{Related Work Details}
\label{app:related_work}

This appendix complements the concise Related Work in the main paper by providing additional context and representative references. We organize the discussion around three threads: (i) time series explainers, (ii) adversarial attacks on time series models and robustness diagnostics for explainers, and (iii) explainer robustness in vision and NLP. Throughout, we highlight how our threat model and objective (dual-target, cover-up manipulation) differs from prior settings.

\paragraph{Time series analysis and time series explainers.}
Time series analysis plays a foundational role across a wide range of domains such as finance~\cite{sezer2020financial,emami2024modality}, traffic systems~\cite{wu2021autoformer,zhou2022fedformer,liu2024itransformer,liu2025graph}, healthcare~\cite{talukder2025explainable, wang2026position, wang2026se}, epidemiology~\cite{liu2024review,wan2025earth}, and audio processing~\cite{baevski2020wav2vec,wang23p_interspeech}. Recent work also broadens time series analysis toward efficient long-range temporal modeling and spatio-temporal reasoning~\cite{ni2025u, ni2026timedistill, ni2026streasoner}. The increasing deployment of deep models in high-stakes applications has made human-understandable explanations a key requirement for trustworthy decision making. While the broader XAI literature is dominated by vision and language settings~\citep{danilevsky2020survey,madsen2022post,bodria2023benchmarking,linardatos2020explainable,samek2021explaining,buhrmester2021analysis}, time series interpretability has received comparatively less attention despite its practical importance~\citep{rojat2021explainable,theissler2022explainable}.
Early efforts often adapt generic attribution tools such as Integrated Gradients and SHAP to temporal inputs~\citep{sundararajan2017axiomatic,lundberg2017unified}. However, time series poses modality-specific challenges: discriminative cues may be embedded in long-term trends, feature relevance can vary over time and across channels, and common perturbation-based explainers may produce out-of-distribution counterfactuals that compromise faithfulness~\citep{liu2024timex++,ye2009time,szegedy2013intriguing}.
To systematically assess these issues, \citet{ismail2020benchmarking} benchmark saliency methods for multivariate time series on synthetic data with ground-truth temporal--feature importance, finding that many explainers are unreliable across RNN/TCN/Transformer backbones; they also propose Temporal Saliency Rescaling (TSR) to improve localization by first identifying salient time steps and then refining per-feature attributions within them.
More recent explainers explicitly target \emph{in-distribution} and \emph{pattern-coherent} rationales. \textsc{TimeX}~\citep{queen2023encoding} encourages consistency between explainer and model latent representations, and \textsc{TimeX++}~\citep{liu2024timex++} further generates label-preserving temporal patterns via an information-theoretic objective that minimizes mutual information with the original series while maximizing mutual information with the prediction.
\textbf{Our work is complementary but security-focused:} rather than proposing another explainer, we study whether such modern, pattern-aware explainers remain reliable under \emph{adaptive} adversaries. We show that an attacker can simultaneously force a target label and steer explanations toward a chosen \emph{reference} rationale, enabling a realistic \emph{cover-up} failure mode for explanation-based auditing.

\paragraph{Adversarial attacks on time series models.}
Adversarial robustness has been extensively studied in vision/NLP~\citep{goodfellow2014explaining,chakraborty2018adversarial,goyal2023survey}, and a growing body of work develops attacks tailored to time series classification and forecasting.
Early approaches largely transplant image-inspired pipelines. For instance, \citet{karim2020adversarial} adapt adversarial transformation networks to synthesize adversarial time series examples against multiple classifiers. Subsequent work incorporates time series structure and threat-model constraints. \citet{ding2023black} propose BlackTreeS, which identifies top-$K$ important temporal regions via tree-position search and then performs region-focused black-box gradient estimation for more imperceptible attacks. \citet{gu2025towards} (SFAttack) further improves stealth by constraining perturbations to discriminative shapelets and to high-frequency components that are less perceptible to human inspection.
For forecasting, \citet{mode2020adversarial} adapt classical gradient-based attacks (e.g., FGSM/BIM) to regression settings and demonstrate the fragility of forecasting networks, while \citet{liu2023robust} study multivariate settings and propose indirect attacks (perturbing correlated series) as well as sparse, stealthy perturbations affecting only a subset of related signals.
In parallel, a smaller line examines \emph{explainer} robustness via diagnostic perturbations. AMEE~\citep{nguyen2024robust} evaluates and ranks explainers by injecting random Gaussian noise into the regions they identify as discriminative; more informative explanations should induce a larger drop in predictive accuracy when those regions are perturbed.
\textbf{Key limitation across these lines:} existing time series attacks predominantly target predictions (without controlling post-attack explanations), while explainer robustness diagnostics typically rely on \emph{non-adaptive} random perturbations rather than an adversary that optimizes against the explainer.
\textbf{In contrast,} our setting is an \emph{active} and \emph{dual-target} manipulation problem: we optimize to (i) force the classifier to a designated label and (ii) steer the explainer toward a specified target (often a reference rationale for auditing), exposing a threat that prediction-only robustness tests and random-noise diagnostics do not capture.

\paragraph{Explainer robustness in other domains.}
Compared to time series, explainer robustness has been more extensively studied in vision and NLP. In image classification, \citet{ghorbani2019interpretation} show that saliency maps from popular gradient-based methods can be drastically altered by imperceptible perturbations while preserving the predicted label, revealing the fragility of attribution-based explanations. \citet{zhang2020interpretable} further demonstrate black-box attacks on interpretable systems, including attacks that selectively fool explanations or jointly mislead predictions and explanation reports. In NLP, \citet{ivankay2022fooling} propose TextExplanationFooler (TEF), using semantics-preserving textual perturbations to significantly change token-level attributions while keeping classifier outputs stable. Beyond these, attacks on explainers have also been explored in the graph domain~\cite{fan2023jointly,li2024graph}.
While these results establish that explanation manipulation is possible in other modalities, transferring the same capability to time series is not straightforward. time series auditing often relies on \emph{temporally coherent, pattern-like} rationales, and our analysis highlights a modality-specific \emph{high-dimensional paradox}: although the $T\times D$ input space offers a large surface for flipping predictions, unconstrained optimization tends to yield dense, temporally incoherent updates that are difficult to steer toward a \emph{target} explanation (Section~\ref{sec:highdim_paradox}).
\textbf{Our contribution is time series specific:} we instantiate a time--frequency structured dual-target attack that achieves pattern-level control, enabling targeted label manipulation while matching a reference rationale for explanation-based auditing.

\section{Proof of Theorem~\ref{prop:dense_diffusion_concise_repeat}}
\label{app:proof_dense_diffusion}

We prove Eq.~\eqref{eq:offtarget_blowup}--\eqref{eq:cannot_match_target}.
Recall $A(\mathbf X)=\mathcal H^E(\mathbf X)\in\mathbb R^d$ and the dense $\ell_\infty$ step
$\delta=-\varepsilon\,\mathrm{sign}(g_c)$ with $\tilde{\mathbf X}=\mathbf X+\delta$.
Here $g_c$ denotes the classification gradient used by the attacker, i.e.,
\[
g_c \;:=\;\nabla_{\mathbf X}\mathcal L_{\mathrm{cls}}(f(\mathbf X),y_{\mathrm{tgt}}),
\]
and $\mathbb E[\cdot]$ is taken over $\mathbf X$ drawn from the evaluation distribution.

\paragraph{Notation.}
For a set $S\subseteq[d]$, define $\|v\|_{1,S}:=\sum_{i\in S}|v_i|$. In our setting, $\Omega$ corresponds to the reference (clean) rationale window.
We assume $\mathbf{A}^{\prime}\in\{0,1\}^d$ is strictly supported on $\Omega$, i.e.,
$\|\mathbf{A}^{\prime}\|_{1,\Omega^c}=0$.
For each coordinate $i$, let $A_i(\mathbf X)$ denote the $i$-th attribution entry.

\paragraph{Assumptions (formalized).}
We make the following local regularity assumptions along the dense update direction.
Let the directional derivative of $A_i$ at $\mathbf X$ along $\delta$ be
\[
D_\delta A_i(\mathbf X)\;:=\;\nabla A_i(\mathbf X)^\top \delta .
\]

\begin{assumption}[Off-window directional sensitivity]
\label{assump:sensitivity_dir}
There exist constants $\eta\in(0,1)$, $\gamma>0$ and a set $\mathcal U\subseteq\Omega^c$ with
$|\mathcal U|\ge \eta(d-|\Omega|)$ such that for all $i\in\mathcal U$,
\begin{equation}
\mathbb E\!\left[\big| \nabla A_i(\mathbf X)^\top \mathrm{sign}(g_c)\big|\right]\ \ge\ \gamma.
\label{eq:assump_gamma_dir}
\end{equation}
\end{assumption}

For standard time series explainers, the attribution $A_i(\mathbf{X})$ is a complex, non-linear function of the input~\citep{sundararajan2017axiomatic, crabbe2021explaining, huang2026shapex, leung2023temporal, chuang2023cortx, queen2023encoding, liu2024timex++}. Unless an explainer explicitly employs a hard-masking mechanism on $\Omega^c$, the gradient $\nabla A_i(\mathbf{X})$ is non-vanishing for the majority of off-window coordinates~\citep{ghorbani2019interpretation, dombrowski2019explanations}. Furthermore, as the classification gradient sign $\mathrm{sign}(g_c) \in \{\pm 1\}^d$ is a dense vector, the directional derivative $\nabla A_i(\mathbf{X})^\top \mathrm{sign}(g_c)$ would vanish only under exact, non-generic cancellation across many input dimensions. Such perfect cancellation is statistically improbable and unstable in the presence of even infinitesimal noise, ensuring that $\gamma$ remains bounded away from zero in practice.

\begin{assumption}[Directional curvature bound under $\ell_\infty$]
\label{assump:curvature_dir_inf}
There exists $L>0$ such that for all $i\in\mathcal U$ and all $t\in[0,1]$,
\begin{equation}
\left|
\delta^\top \nabla^2 A_i(\mathbf X+t\delta)\,\delta
\right|
\ \le\ L\,\|\delta\|_\infty^2 .
\label{eq:assump_hessian_dir_inf}
\end{equation}
\end{assumption}

\begin{assumption}[Bounded clean background magnitude]
\label{assump:nocancel_dir}
There exists a constant $\beta_0\ge 0$ such that for all $i\in\mathcal U$,
\begin{equation}
\mathbb E\!\left[|A_i(\mathbf X)|\right]\ \le\ \beta_0 .
\label{eq:assump_bgconst_dir}
\end{equation}
\end{assumption}

Assumption~\ref{assump:nocancel_dir} states that off-window attributions have a bounded baseline magnitude
on clean inputs, consistent with the regime where explanations are concentrated on the salient window $\Omega$.


\paragraph{Step 1: A coordinate-wise lower bound on $|A_i(\mathbf X+\delta)|$.}
Fix any $i\in\mathcal U$ and define the scalar function $\phi_i(t):=A_i(\mathbf X+t\delta)$ for $t\in[0,1]$.
By Taylor's theorem along the direction $\delta$, there exists $\xi_i\in(0,1)$ such that
\begin{equation}
A_i(\mathbf X+\delta)
=
A_i(\mathbf X)
+\underbrace{\nabla A_i(\mathbf X)^\top \delta}_{=:~b_i}
+\underbrace{\frac{1}{2}\,
\delta^\top \nabla^2 A_i(\mathbf X+\xi_i\delta)\,\delta}_{=:~r_i}.
\label{eq:taylor_dir_A}
\end{equation}
By Assumption~\ref{assump:curvature_dir_inf},
\begin{equation}
|r_i|
\ \le\ \frac{L}{2}\,\|\delta\|_\infty^2 .
\label{eq:remainder_bound_dir_A_inf}
\end{equation}

Using the reverse triangle inequality and then the triangle inequality,
\[
|A_i(\mathbf X+\delta)|
=
|\,b_i + (A_i(\mathbf X)+r_i)\,|
\ \ge\ |b_i| - |A_i(\mathbf X)+r_i|
\ \ge\ |b_i| - |A_i(\mathbf X)| - |r_i|.
\]
Substituting $\delta=-\varepsilon\,\mathrm{sign}(g_c)$ gives
\[
|b_i|
=
\varepsilon\,\big|\nabla A_i(\mathbf X)^\top \mathrm{sign}(g_c)\big|,
\qquad
\|\delta\|_\infty=\varepsilon.
\]
Taking expectation and using Eq.~\eqref{eq:remainder_bound_dir_A_inf},
Assumption~\ref{assump:sensitivity_dir}, and Assumption~\ref{assump:nocancel_dir}, we obtain
\begin{align}
\mathbb E\!\left[|A_i(\mathbf X+\delta)|\right]
&\ge
\varepsilon\,\mathbb E\!\left[\big|\nabla A_i(\mathbf X)^\top \mathrm{sign}(g_c)\big|\right]
-\mathbb E\!\left[|A_i(\mathbf X)|\right]
-\mathbb E\!\left[|r_i|\right]
\notag\\
&\ge
\gamma\,\varepsilon - \beta_0 -\frac{L}{2}\,\varepsilon^2 .
\label{eq:per_coord_lower_abs_fixed}
\end{align}

\paragraph{Step 2: Sum over many off-window coordinates (no sign issue).}
Since $|A_i(\mathbf X+\delta)|\ge 0$, we can safely drop nonnegative terms:
\begin{align}
\mathbb E\!\left[\|A(\mathbf X+\delta)\|_{1,\Omega^c}\right]
=
\sum_{i\in\Omega^c}\mathbb E\!\left[|A_i(\mathbf X+\delta)|\right]
\ \ge\
\sum_{i\in\mathcal U}\mathbb E\!\left[|A_i(\mathbf X+\delta)|\right].
\label{eq:sum_nonneg_safe}
\end{align}
Combining Eq.~\eqref{eq:sum_nonneg_safe} with Eq.~\eqref{eq:per_coord_lower_abs_fixed} yields
\begin{equation}
\mathbb E\!\left[\|A(\mathbf X+\delta)\|_{1,\Omega^c}\right]
\ \ge\
|\mathcal U|\left(\gamma\,\varepsilon - \beta_0 -\frac{L}{2}\,\varepsilon^2\right).
\label{eq:offwindow_abs_bound_fixed}
\end{equation}
Using $|\mathcal U|\ge \eta(d-|\Omega|)$, we obtain
\begin{equation}
\mathbb E\!\left[\|A(\mathbf X+\delta)\|_{1,\Omega^c}\right]
\ \ge\
\eta(d-|\Omega|)\left(\gamma\,\varepsilon - \beta_0 -\frac{L}{2}\,\varepsilon^2\right).
\label{eq:offwindow_abs_bound_fixed_eta}
\end{equation}

For any \begin{equation} \varepsilon \in \left[\frac{4\beta_0}{\gamma},\ \frac{\gamma}{2L}\right], \label{eq:eps_range} \end{equation} 

The interval in~\eqref{eq:eps_range} is non-empty iff
$\frac{4\beta_0}{\gamma}\le \frac{\gamma}{2L}$, i.e., $\beta_0 \le \frac{\gamma^2}{8L}$.
This condition is plausible in our setting since standard time series explainers typically concentrate attribution on the salient window, making the expected off-window baseline magnitude $\beta_0$ sufficiently small.

Based on Eq.~\ref{eq:offwindow_abs_bound_fixed_eta},
\[
\beta_0 \ \le\ \frac{\gamma}{4}\varepsilon,
\qquad
\frac{L}{2}\varepsilon^2 \ \le\ \frac{\gamma}{4}\varepsilon,
\]
and hence
\[
\gamma\varepsilon-\beta_0-\frac{L}{2}\varepsilon^2
\ \ge\ \frac{\gamma}{2}\varepsilon.
\]
Let
\begin{equation}
c:=\frac{\eta\gamma}{2}.
\label{eq:def_c_dir_A_fixed}
\end{equation}
Then Eq.~\eqref{eq:offwindow_abs_bound_fixed_eta} implies Eq.~\eqref{eq:offtarget_blowup}:
\[
\mathbb E\!\left[\|A(\mathbf X+\delta)\|_{1,\Omega^c}\right]
\ \ge\ c\,\varepsilon\,(d-|\Omega|).
\]

\paragraph{Step 3: Lower bound the mismatch to the target hard mask.}
Since $\mathbf{A}^{\prime}$ is supported on $\Omega$, we have $(\mathbf{A}^{\prime})_i=0$ for all $i\in\Omega^c$.
Therefore,
\begin{align}
\|A(\mathbf X+\delta)-\mathbf{A}^{\prime}\|_1
&=
\sum_{i\in\Omega}|A_i(\mathbf X+\delta)-1|
\;+\;
\sum_{i\in\Omega^c}|A_i(\mathbf X+\delta)-0|
\notag\\
&\ge
\sum_{i\in\Omega^c}|A_i(\mathbf X+\delta)|
=
\|A(\mathbf X+\delta)\|_{1,\Omega^c}.
\label{eq:mismatch_ge_offmass}
\end{align}
Taking expectation and combining with Eq.~\eqref{eq:offtarget_blowup} gives
\[
\mathbb E\!\left[\|A(\mathbf X+\delta)-\mathbf{A}^{\prime}\|_1\right]
\ \ge\
\mathbb E\!\left[\|A(\mathbf X+\delta)\|_{1,\Omega^c}\right]
\ \ge\
c\,\varepsilon\,(d-|\Omega|),
\]
which is Eq.~\eqref{eq:cannot_match_target}. \qed

\section{TSEF Algorithm}
\label{app:algo}
\begin{algorithm}[H]
    \caption{TimeSeriesExplanationFooler (TSEF)}
    \label{alg:training}
    \begin{algorithmic}[1]
        \REQUIRE Input series $\mathbf{X}$; Target label $y^{\prime}$ and attribution $\mathbf{A}^{\prime}$; Predictor $f$ and Explainer $\mathcal{H}^E$; Budget $\epsilon$; Iterations $I, K_t, K_f$; Learning rates $\eta_t, \eta_f$;
        \ENSURE Adversarial time series $\mathbf{X}^{\text{adv}}$.

        \STATE $\mathbf{X}^{\min}, \mathbf{X}^{\max} \leftarrow \min(\mathbf{X}), \max(\mathbf{X})$
        \STATE $\epsilon' \leftarrow \epsilon \cdot (\mathbf{X}^{\max} - \mathbf{X}^{\min})$ \COMMENT{Instance-wise perturbation budget}
        \STATE $\mathbf{X}_1 \leftarrow \mathbf{X}$ 
        \FOR{$i=1, \ldots, I-1$}
            \STATE Reset $\mathbf{M}_t$ and $\Theta_f$ \COMMENT{re-identify vulnerable patterns on current $\mathbf{X}_i$}
            \STATE \textbf{// Stage 1: Optimize Temporal Mask ($\mathbf{M}_t$)}
            \FOR{$k=1, \dots, K_t$}
                \STATE Sample binary mask $\tilde{\mathbf{M}}_t \sim \text{Concrete}(\mathbf{M}_t)$ \COMMENT{Gumbel-Sigmoid relaxation}
                \STATE $\mathbf{X}' \leftarrow \mathbf{X}_i \odot (1-\tilde{\mathbf{M}}_t)$
                \STATE Compute loss $\mathcal{L}_t(\mathbf{M}_t)$ 
                \STATE $\mathbf{M}_t \leftarrow \Pi_{[0,1]}\left(\mathbf{M}_t - \eta_t \cdot \operatorname{sign}(\nabla_{\mathbf{M}_t} \mathcal{L}_t)\right)$ \COMMENT{Amplitude-invariant update}
            \ENDFOR

            \STATE \textbf{// Stage 2: Optimize Frequency Perturbation Filter}
            \FOR{$k=1, \dots, K_f$}
                \STATE Sample mask $\tilde{\mathbf{M}}_t$ using optimized $\mathbf{M}_t$
                \STATE Get segment spectrum: $\widehat{W} \leftarrow \mathcal{F}(\tilde{\mathbf{M}}_t \odot \mathbf{X}_i)$
                \STATE Compute unit perturbation: $\Delta \mathbf{X}_{\mathrm{base}} \leftarrow \mathcal{F}^{-1}(\widehat{W} \odot \tanh(\Theta_f))$
                \STATE Compute scaling: $\alpha_{\mathrm{freq}} \leftarrow \gamma \frac{\epsilon'}{\|\Delta \mathbf{X}_{\mathrm{base}}\|_\infty + \tau}$ 
                \STATE Construct filter: $\mathbf{M}_f \leftarrow \Pi_{[0,2]}(1 + \alpha_{\mathrm{freq}} \tanh(\Theta_f))$
                \STATE Reconstruct: $\tilde{\mathbf{X}} \leftarrow \mathbf{X}_i \odot (1-\tilde{\mathbf{M}}_t) + \mathcal{F}^{-1}(\widehat{W} \odot \mathbf{M}_f)$
                \STATE Compute attack loss $J_{\mathrm{atk}}(\tilde{\mathbf{X}})$
                \STATE $\Theta_f \leftarrow \Theta_f - \eta_f \cdot \operatorname{sign}(\nabla_{\Theta_f} J_{\mathrm{atk}})$ \COMMENT{Energy-invariant update}
            \ENDFOR

            \STATE Clip the perturbed input to satisfy the norm
            constraint: $
            \left\| \tilde{\mathbf{X}}-\mathbf{X}\right\|_{\infty} \leq \epsilon'
            $            
            \STATE Clip $\tilde{\mathbf{X}}$ element-wise to the range $[\mathbf{X}^{\min}, \mathbf{X}^{\max}]$
            \STATE $\mathbf{X}_{i+1} \leftarrow \tilde{\mathbf{X}}$ 
            
        \ENDFOR

        \STATE $\mathbf{X}^{\text{adv}} \leftarrow \mathbf{X}_I$ 
        \STATE \textbf{return} $\mathbf{X}^{\text{adv}}$
    \end{algorithmic}
\end{algorithm}

Algorithm~\ref{alg:training} details the implementation of \textbf{TSEF} introduced in Section~\ref{sec:tsef_attack}.
Following the high-dimensional paradox (Section~\ref{sec:highdim_paradox}), TSEF turns dual-target feasibility into \emph{structured controllability} by decoupling the attack into \emph{when} to attack and \emph{how} to attack.
Concretely, TSEF alternates two stages under an $\ell_\infty$ budget: 
(i) \textbf{Temporal Vulnerability Mask} learns a sparse, connected support $\mathbf{M}_t$ that identifies vulnerable temporal patterns for the \emph{joint} objective (target prediction and reference explanation), using a Concrete relaxation with a straight-through estimator for discrete-like pattern selection; 
(ii) \textbf{Frequency Perturbation Filter} then performs a constrained spectral edit within this support, modifying trend/periodicity while keeping the signal largely intact.
Both stages use sign-based updates to reduce sensitivity to raw amplitude/energy scales, and we adaptively rescale the frequency edit to satisfy the instance-wise $\ell_\infty$ constraint.
The final adversarial series is obtained by reconstructing the edited segment via $\mathcal{F}^{-1}$ and reinserting it into the unmasked context. We convert the normalized budget $\epsilon$ into an instance-wise range-aware bound $\epsilon'=\epsilon(\mathbf{X}^{\max}-\mathbf{X}^{\min})$ to make perturbations comparable across signals with different amplitudes.

\section{Properties of the TSEF Optimization (Section~\ref{sec:tsef_attack})}
\subsection{Tractable upper bound on mutual information}
\label{prop:mi-upper-bound}

\textbf{Proposition.}
Let $(X,M)$ be random variables with conditional distribution $P(M\mid X)$ and marginal $P(M)$.
Let $Q(M)$ be any fixed prior independent of $X$.

Then
\[
I(X;M)\;\le\;\mathbb{E}_X\!\left[\mathrm{KL}\bigl(P(M\mid X)\,\|\,Q(M)\bigr)\right].
\]

\begin{proof}
By definition,
\[
I(X;M)=\mathbb{E}_X\!\left[\mathrm{KL}\bigl(P(M\mid X)\,\|\,P(M)\bigr)\right]
=\mathbb{E}_{X,M}\!\left[\log\frac{p(M\mid X)}{p(M)}\right],
\]
where $p(\cdot)$ denotes a density or mass function.
Similarly,
\[
\mathbb{E}_X\!\left[\mathrm{KL}\bigl(P(M\mid X)\,\|\,Q(M)\bigr)\right]
=\mathbb{E}_{X,M}\!\left[\log\frac{p(M\mid X)}{q(M)}\right].
\]
Using the decomposition
\[
\log\frac{p(M\mid X)}{q(M)}
=
\log\frac{p(M\mid X)}{p(M)}
+
\log\frac{p(M)}{q(M)},
\]
and taking expectations yields the identity
\[
\mathbb{E}_X\!\left[\mathrm{KL}\bigl(P(M\mid X)\,\|\,Q(M)\bigr)\right]
=
I(X;M)+\mathrm{KL}\bigl(P(M)\,\|\,Q(M)\bigr).
\]
Since $\mathrm{KL}(P(M)\|Q(M))\ge 0$, we conclude
$
I(X;M)\le \mathbb{E}_X[\mathrm{KL}(P(M\mid X)\|Q(M))].
$
\end{proof}

\subsection{Amplitude-invariant Step of the Temporal Mask}
\label{sec:proof_mt}

We show that the gradient-based update for the temporal mask selects vulnerable time--channels by testing the \emph{directional alignment} between the input and the loss gradient, and that this selection is invariant to the raw amplitude of $\mathbf{X}$.

\begin{proposition}[Amplitude-invariant optimization]
\label{prop:mt-magnitude-invariant}
Let $\mathbf{X} \in \mathbb{R}^{T\times D}$ be the input. 
Let $\mathbf{M}_t \in [0,1]^{T\times D}$ denote the continuous mask probabilities (which parameterize the Gumbel-Softmax distribution).
For the purpose of gradient analysis, we consider the \textbf{deterministic relaxation} $\mathbf{X}' = \mathbf{X} \odot (1 - \mathbf{M}_t)$.
The gradient of any differentiable loss $\mathcal{L}_t$ with respect to the mask $\mathbf{M}_t$ is:
\begin{equation}
\label{eq:mt-grad}
\frac{\partial \mathcal{L}_t}{\partial \mathbf{M}_t[t,d]}
~=~ - \mathbf{X}[t,d] \cdot \frac{\partial \mathcal{L}_t}{\partial \mathbf{X}'[t,d]}.
\end{equation}
Consequently, the sign of the gradient depends only on the \textbf{directional correlation} between the input feature $\mathbf{X}[t,d]$ and the loss sensitivity $\nabla_{\mathbf{X}'} \mathcal{L}_t$, independent of the magnitude $|\mathbf{X}[t,d]|$.

Furthermore, under the parameterization $\mathbf{M}_t(\Theta_t) = \sigma(\Theta_t)$, the update of the logits $\Theta_t$ inherits this invariance:
\begin{equation}
\label{eq:mt-sign-invariance}
\operatorname{sign}\!\left(\frac{\partial \mathcal{L}_t}{\partial \Theta_t[t,d]}\right)
~=~
-\operatorname{sign}\!\left(\mathbf{X}[t,d]\right) \cdot \operatorname{sign}\!\left(\frac{\partial \mathcal{L}_t}{\partial \mathbf{X}'[t,d]}\right).
\end{equation}
Thus, the update prioritizes coordinates where the fixed perturbation direction ($-\mathbf{X}$) aligns with the local descent direction, regardless of signal amplitude.
\end{proposition}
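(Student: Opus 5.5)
The argument is a chain-rule computation followed by a sign identity. First I fix the deterministic relaxation $\mathbf{X}'=\mathbf{X}\odot(1-\mathbf{M}_t)$. Here $\mathbf{X}$ is treated as a constant and $\mathbf{M}_t$ as the only variable, so $\mathcal{L}_t$ depends on $\mathbf{M}_t$ through two routes: through $\mathbf{X}'$ in the explanation and classification terms, and directly through the regularizers $\mathcal{L}_{\mathrm{spa}},\mathcal{L}_{\mathrm{con}}$. I would state explicitly that the gradient formula in the proposition concerns the adversarial part of the loss, i.e.\ $\mathcal{L}_t$ viewed as a function of $\mathbf{X}'$ only. Otherwise the regularizer gradients would add extra terms, which I would mention in a remark as amplitude-free anyway, since they do not involve $\mathbf{X}$.

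\textbf{Deriving Eq.~\eqref{eq:mt-grad}.} The map $\mathbf{M}_t\mapsto\mathbf{X}'$ is elementwise. Hence $\partial \mathbf{X}'[s,e]/\partial \mathbf{M}_t[t,d] = -\mathbf{X}[t,d]\,\mathbb{1}[(s,e)=(t,d)]$. The multivariate chain rule then collapses the sum over $(s,e)$ to the single term $-\mathbf{X}[t,d]\,\partial\mathcal{L}_t/\partial\mathbf{X}'[t,d]$, which is Eq.~\eqref{eq:mt-grad}.

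\textbf{Sign factorization.} Taking signs and using multiplicativity of $\operatorname{sign}$ over real products gives
\[
\operatorname{sign}\bigl(\partial\mathcal{L}_t/\partial\mathbf{M}_t[t,d]\bigr)=-\operatorname{sign}(\mathbf{X}[t,d])\operatorname{sign}\bigl(\partial\mathcal{L}_t/\partial\mathbf{X}'[t,d]\bigr).
\]
The first factor depends on $\mathbf{X}[t,d]$ only through its sign, which gives the amplitude-invariance claim for the projected sign step. To state this cleanly, I would note that rescaling $\mathbf{X}[t,d]\mapsto c\,\mathbf{X}[t,d]$ with $c>0$ changes the gradient magnitude by the factor $c$ but leaves the sign unchanged, and therefore leaves the update unchanged, provided $\partial\mathcal{L}_t/\partial\mathbf{X}'[t,d]$ is held fixed.

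\textbf{Logit parameterization.} For $\mathbf{M}_t=\sigma(\Theta_t)$, the chain rule adds the factor $\sigma'(\Theta_t[t,d])=\sigma(1-\sigma)>0$. Multiplying by a strictly positive scalar preserves sign, which yields Eq.~\eqref{eq:mt-sign-invariance}.

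\textbf{Main subtlety.} The delicate point is not the calculus but interpreting ``independent of the magnitude.'' The factor $\partial\mathcal{L}_t/\partial\mathbf{X}'[t,d]$ is evaluated at $\mathbf{X}'$, which itself depends on $\mathbf{X}$, so full independence is false. I would therefore phrase the claim conditionally: given the local sensitivity at the current masked input, the update direction depends on $\mathbf{X}[t,d]$ only through $\operatorname{sign}(\mathbf{X}[t,d])$. I would also handle the degenerate cases $\mathbf{X}[t,d]=0$ or a zero loss sensitivity, where both sides of the identity equal zero under the convention $\operatorname{sign}(0)=0$.
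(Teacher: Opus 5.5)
Your proposal is correct and follows the paper's proof step for step: the elementwise chain rule giving $-\mathbf{X}[t,d]\,\partial\mathcal{L}_t/\partial\mathbf{X}'[t,d]$, multiplicativity of $\operatorname{sign}$, and the strictly positive factor $\sigma(\Theta_t)(1-\sigma(\Theta_t))$ for the logit parameterization. Your caveats make explicit two things the paper leaves implicit: that the formula applies only to the part of $\mathcal{L}_t$ depending on $\mathbf{M}_t$ through $\mathbf{X}'$, and that $\partial\mathcal{L}_t/\partial\mathbf{X}'$ itself depends on $\mathbf{X}$. Just don't let your remark that the regularizer gradients are amplitude-free suggest the sign identity holds for the full loss, because the sign of $-\mathbf{X}[t,d]\,\partial\mathcal{L}_t/\partial\mathbf{X}'[t,d]$ plus an $\mathbf{X}$-free regularizer term generally does depend on $|\mathbf{X}[t,d]|$.
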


\begin{proof}
\textbf{Gradient Derivation.}
By the element-wise masking relation $\mathbf{X}'[t,d] = \mathbf{X}[t,d] \cdot (1 - \mathbf{M}_t[t,d])$, a partial derivative calculation via the chain rule yields:
\[
\frac{\partial \mathcal{L}_t}{\partial \mathbf{M}_t[t,d]}
~=~
\frac{\partial \mathcal{L}_t}{\partial \mathbf{X}'[t,d]} \cdot \frac{\partial \mathbf{X}'[t,d]}{\partial \mathbf{M}_t[t,d]}
~=~
\frac{\partial \mathcal{L}_t}{\partial \mathbf{X}'[t,d]} \cdot \bigl(-\mathbf{X}[t,d]\bigr).
\]
This confirms Eq.~\eqref{eq:mt-grad}.

\textbf{Amplitude Invariance.}
From Eq.~\eqref{eq:mt-grad}, the magnitude of the gradient scales linearly with the input amplitude:
\[
\left|\frac{\partial \mathcal{L}_t}{\partial \mathbf{M}_t[t,d]}\right| \propto |\mathbf{X}[t,d]|.
\]
Standard gradient descent would be biased toward high-amplitude regions. However, taking the \emph{sign} removes this factor:
\[
\operatorname{sign}\!\left(\frac{\partial \mathcal{L}_t}{\partial \mathbf{M}_t[t,d]}\right)
~=~
\operatorname{sign}(-\mathbf{X}[t,d]) \cdot \operatorname{sign}(\nabla_{\mathbf{X}'}\mathcal{L}_t).
\]
This direction relies solely on whether suppressing the feature (moving along $-\mathbf{X}$) reduces the loss, decoupling the selection from the raw scale $|\mathbf{X}|$.

\textbf{Invariance for $\Theta_t$.}
In practice, we implement this update in the logit space $\Theta_t$ (with $\mathbf{M}_t=\sigma(\Theta_t)$ where $\sigma(\cdot)$ is the sigmoid function) for stable parameterization; we write the update in terms of $\mathbf{M}_t$ in the main text for notational simplicity. By the chain rule:
\[
\frac{\partial \mathcal{L}_t}{\partial \Theta_t}
~=~
\frac{\partial \mathcal{L}_t}{\partial \mathbf{M}_t} \cdot \frac{\partial \mathbf{M}_t}{\partial \Theta_t}
~=~
\frac{\partial \mathcal{L}_t}{\partial \mathbf{M}_t} \cdot \underbrace{\bigl[\sigma(\Theta_t)(1-\sigma(\Theta_t))\bigr]}_{> 0}.
\]
Since the derivative of the sigmoid function is strictly positive, the sign is preserved:
\[
\operatorname{sign}\!\left(\frac{\partial \mathcal{L}_t}{\partial \Theta_t}\right)
~=~
\operatorname{sign}\!\left(\frac{\partial \mathcal{L}_t}{\partial \mathbf{M}_t}\right).
\]
Therefore, the sign-based update on logits $\Theta_t$ is mathematically equivalent to the amplitude-invariant selection derived for $\mathbf{M}_t$.
\end{proof}

\paragraph{Interpretation.}
Because increasing $\mathbf{M}_t$ (masking) moves the input along the fixed direction $-\mathbf{X}$, the “vulnerable” positions are precisely those where this direction aligns with the negative gradient of the loss. Using the sign update ensures that large-amplitude artifacts do not dominate learning: coordinates are selected by \emph{sensitivity} (gradient direction) rather than raw signal size.

\subsection{Energy-invariant Step of the Frequency Filter}
\label{sec:proof_mf}

\begin{proposition}[Energy-invariant optimization]
\label{prop:mf-magnitude-invariant}
Let $W = \mathbf{M}_t^* \odot \mathbf{X}$, and let $\widehat{W} = \mathcal{F}(W) \in \mathbb{C}^{K \times D}$ denote its discrete Fourier transform.
Consider a real-valued multiplicative filter $\mathbf{M}_f \in [0,2]^{K \times D}$ applied as $\widehat{W}' = \widehat{W} \odot \mathbf{M}_f$, followed by reconstruction $\widetilde{W} = \mathcal{F}^{-1}(\widehat{W}')$ and loss evaluation $J_{\mathrm{atk}}(\widetilde{W})$.
Let the spectral energy of a bin be $E[k,d] := |\widehat{W}[k,d]|^2$.

The gradient of the loss with respect to the filter $\mathbf{M}_f$ is given by:
\begin{equation}
\label{eq:grad-mf}
\frac{\partial J_{\mathrm{atk}}}{\partial \mathbf{M}_f[k,d]}
~=~
\Re\!\left(
    \overline{\widehat{W}[k,d]} \cdot
    \frac{\partial J_{\mathrm{atk}}}{\partial \widehat{W}'[k,d]}
\right),
\end{equation}
where $\Re(\cdot)$ denotes the real part and the overline denotes complex conjugation.
Consequently, the sign-based update direction is invariant to the spectral amplitude $|\widehat{W}[k,d]|$ (and thus invariant to the energy $E[k,d]$).

Furthermore, under the parameterization $\mathbf{M}_f(\Theta_f) = \Pi_{[0,2]}(1+\alpha_{\mathrm{freq}}\tanh(\Theta_f))$, the update of the logits $\Theta_f$ inherits this invariance:
\begin{equation}
\label{eq:sign-invariance-theta}
\operatorname{sign}\!\left(\frac{\partial J_{\mathrm{atk}}}{\partial \Theta_f[k,d]}\right)
~=~
\operatorname{sign}\!\left(
    \Re\!\left(\overline{\widehat{W}[k, d]}\,
    \frac{\partial J_{\text {atk }}}{\partial \widehat{W}^{\prime}[k, d]}\right)
\right).
\end{equation}
\end{proposition}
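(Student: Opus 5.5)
The plan is to mirror the proof of Proposition~\ref{prop:mt-magnitude-invariant}: derive the chain rule through the multiplicative spectral filter, then note that the sign operator discards the positive magnitude factor. Throughout I treat $J_{\mathrm{atk}}$ as a real-valued function of the complex spectrum $\widehat{W}'$. I identify each complex bin $z=\widehat{W}'[k,d]$ with the real pair $(\Re z,\Im z)$, and define
\[
\frac{\partial J_{\mathrm{atk}}}{\partial \widehat{W}'[k,d]}:=\frac{\partial J_{\mathrm{atk}}}{\partial \Re \widehat{W}'[k,d]}+i\,\frac{\partial J_{\mathrm{atk}}}{\partial \Im \widehat{W}'[k,d]},
\]
which is the usual convention in autodiff frameworks, i.e., the conjugate Wirtinger derivative up to a factor of $2$. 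Since $\mathcal{F}^{-1}$ is linear and $J_{\mathrm{atk}}$ is differentiable in $\widetilde{W}$, this quantity is well defined. The conjugate-symmetry constraint on $\mathbf{M}_f$ only ties paired bins together; it adds their contributions, so it leaves the per-bin form unchanged up to a positive factor.

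\textbf{Step 1 (gradient formula).} Since $\mathbf{M}_f[k,d]$ is real, we have $\partial \Re\widehat{W}'/\partial \mathbf{M}_f=\Re\widehat{W}$ and $\partial \Im\widehat{W}'/\partial \mathbf{M}_f=\Im\widehat{W}$. The real chain rule then gives
\[
\frac{\partial J_{\mathrm{atk}}}{\partial \mathbf{M}_f[k,d]}=\Re\widehat{W}\,\partial_{\Re}J+\Im\widehat{W}\,\partial_{\Im}J=\Re\!\left(\overline{\widehat{W}[k,d]}\,\frac{\partial J_{\mathrm{atk}}}{\partial \widehat{W}'[k,d]}\right),
\]
which is Eq.~\eqref{eq:grad-mf}.

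\textbf{Step 2 (energy invariance).} Write $\widehat{W}[k,d]=|\widehat{W}[k,d]|e^{i\phi}$. Then the gradient equals $|\widehat{W}[k,d]|\cdot\Re\!\left(e^{-i\phi}\,\partial J/\partial\widehat{W}'\right)$. Its magnitude scales with $\sqrt{E[k,d]}$, while its sign depends only on the phase $\phi$ and on the downstream sensitivity. This holds whenever $\widehat{W}[k,d]\neq0$; if $\widehat{W}[k,d]=0$ the gradient vanishes and both sides of the claimed identity are $0$.

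\textbf{Step 3 (logit parameterization).} On the unclipped region, $\partial\mathbf{M}_f/\partial\Theta_f=\alpha_{\mathrm{freq}}(1-\tanh^2\Theta_f)$, which is strictly positive when $\alpha_{\mathrm{freq}}>0$. Multiplying by this factor preserves the sign and yields Eq.~\eqref{eq:sign-invariance-theta}.

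The main obstacle is that $\alpha_{\mathrm{freq}}$ itself depends on $\Theta_f$ through $\|\Delta\mathbf{X}_{\mathrm{base}}\|_\infty$, and the projection $\Pi_{[0,2]}$ has zero derivative where it is active. I would resolve both the way the algorithm does: treat $\alpha_{\mathrm{freq}}$ as a constant within each step (it is recomputed after each update, i.e., a stop-gradient), and state the identity on the interior $0<\mathbf{M}_f<2$ with $\alpha_{\mathrm{freq}}>0$. On the saturated set the gradient is zero, so the sign identity holds only under the convention $\operatorname{sign}(0)=0$ for both sides; this needs an explicit remark in the statement.
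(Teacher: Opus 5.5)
Your proposal is correct and follows the paper's proof. The paper likewise derives \eqref{eq:grad-mf} from $\Delta\widehat{W}' = \widehat{W}\,\Delta\mathbf{M}_f$ through the first-order variation $\Re(\overline{\partial J_{\mathrm{atk}}/\partial\widehat{W}'}\,\Delta\widehat{W}')$, which is exactly your real chain rule under your $\partial_{\Re}+i\,\partial_{\Im}$ convention. It then removes the positive factor $|\widehat{W}|$ via the sign, and gets the $\Theta_f$ statement by detaching $\alpha_{\mathrm{freq}}$, ignoring clipping, and using $\alpha_{\mathrm{freq}}(1-\tanh^2\Theta_f)>0$. One small correction to your closing remark: on the saturated set the right-hand side $\Re(\overline{\widehat{W}}\,\partial J_{\mathrm{atk}}/\partial\widehat{W}')$ need not vanish, so no $\operatorname{sign}(0)=0$ convention rescues the identity there; the interior restriction you propose (the paper's ``ignoring clipping'') is genuinely required.
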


\begin{proof}
\textbf{Gradient Derivation.}
Since $\mathbf{M}_f[k,d]\in\mathbb{R}$ is a scalar scaling factor for the complex variable $\widehat{W}[k,d]$,
a perturbation $\Delta \mathbf{M}_f[k,d]$ induces
\[
\Delta \widehat{W}'[k,d] \;=\; \widehat{W}[k,d]\cdot \Delta \mathbf{M}_f[k,d].
\]
Using standard $\mathbb{C}\mathbb{R}$-calculus (Wirtinger calculus) rules~\citep{kreutz2009complex},
the first-order variation of the real-valued function $J_{\mathrm{atk}}$ can be written as
\[
\Delta J_{\mathrm{atk}}
~\approx~
\sum_{k,d} \Re\!\left(
    \overline{\frac{\partial J_{\mathrm{atk}}}{\partial \widehat{W}'[k,d]}}
    \cdot \Delta \widehat{W}'[k,d]
\right),
\]
where the complex conjugation appears on the gradient term by the real-valued first-order variation formula.
Substituting $\Delta \widehat{W}'[k,d]=\widehat{W}[k,d]\Delta \mathbf{M}_f[k,d]$ yields
\[
\Delta J_{\mathrm{atk}}
~\approx~
\sum_{k,d} \Re\!\left(
    \overline{\frac{\partial J_{\mathrm{atk}}}{\partial \widehat{W}'[k,d]}}
    \cdot \widehat{W}[k,d]
\right)\,\Delta \mathbf{M}_f[k,d].
\]
Therefore,
\[
\frac{\partial J_{\mathrm{atk}}}{\partial \mathbf{M}_f[k,d]}
~=~
\Re\!\left(
    \overline{\frac{\partial J_{\mathrm{atk}}}{\partial \widehat{W}'[k,d]}}
    \cdot \widehat{W}[k,d]
\right)
~=~
\Re\!\left(
    \overline{\widehat{W}[k,d]} \cdot \frac{\partial J_{\mathrm{atk}}}{\partial \widehat{W}'[k,d]}
\right),
\]
where the last equality uses $\Re(\overline{A}B)=\Re(A\overline{B})$.
This confirms Eq.~\eqref{eq:grad-mf}.



\textbf{Energy Invariance.}
From Eq.~\eqref{eq:grad-mf}, we observe that the gradient magnitude scales linearly with the input spectral amplitude:
\[
\left|\frac{\partial J_{\mathrm{atk}}}{\partial \mathbf{M}_f[k,d]}\right| \propto |\widehat{W}[k,d]| = \sqrt{E[k,d]}.
\]
Consider a scaling of the input spectrum $\widehat{W} \mapsto c \cdot \widehat{W}$ for some $c > 0$. The gradient scales by $c$, but its \emph{sign} remains unchanged:
\[
\operatorname{sign}(c \cdot \nabla) = \operatorname{sign}(\nabla).
\]
Thus, the sign update removes the dependence on the magnitude $|\widehat{W}|$, driven purely by the directional alignment (phase correlation) between the signal $\widehat{W}$ and the error sensitivity $\nabla_{\widehat{W}'}J$.

\textbf{Invariance for $\Theta_f$.}
In TSEF, we update the logits $\Theta_f$ where $\mathbf{M}_f = 1 + \alpha_{\mathrm{freq}} \tanh(\Theta_f)$ (ignoring clipping for gradient direction). In implementation, we treat $\alpha_{\mathrm{freq}}$ as a normalization constant and detach it from the computation graph when updating $\Theta_f$; hence $\partial \alpha_{\mathrm{freq}}/\partial \Theta_f = 0$. By the chain rule:
\[
\frac{\partial J_{\mathrm{atk}}}{\partial \Theta_f}
~=~
\frac{\partial J_{\mathrm{atk}}}{\partial \mathbf{M}_f} \cdot \frac{\partial \mathbf{M}_f}{\partial \Theta_f}
~=~
\frac{\partial J_{\mathrm{atk}}}{\partial \mathbf{M}_f} \cdot \underbrace{\left[\alpha_{\mathrm{freq}} \cdot (1 - \tanh^2(\Theta_f))\right]}_{> 0}.
\]

Since $\alpha_{\mathrm{freq}} > 0$ and $(1-\tanh^2) > 0$, the partial derivative $\frac{\partial \mathbf{M}_f}{\partial \Theta_f}$ is a strictly positive scalar.
Therefore,
\[
\operatorname{sign}\!\left(\frac{\partial J_{\mathrm{atk}}}{\partial \Theta_f}\right)
~=~
\operatorname{sign}\!\left(\frac{\partial J_{\mathrm{atk}}}{\partial \mathbf{M}_f}\right).
\]
This proves that the energy-invariant property of the filter-space gradient directly governs the update direction of the parameter $\Theta_f$.
\end{proof}

\section{Description of Datasets}
\label{sec:description_of_datasets}

We follow the dataset construction and preprocessing protocols of~\cite{queen2023encoding, liu2024timex++} and use both synthetic and real-world benchmarks. Below we summarize the datasets used in our experiments and how ground-truth explanations are defined when available.

\subsection{Synthetic datasets}

We adopt three synthetic datasets with known ground-truth explanations to study how well attacks and explainers can identify the underlying predictive signal. Following~\cite{queen2023encoding, liu2024timex++}, all synthetic time series are built on top of a non-autoregressive moving average (NARMA) noise process, into which class-specific patterns are inserted. The design follows standard recommendations for synthetic benchmarks that avoid shortcut solutions and require the model to detect precise temporal patterns rather than simple heuristics. For each synthetic dataset, we generate 5{,}000 training samples, 1{,}000 test samples, and 100 validation samples.

\textbf{\textsc{SeqComb-UV}.}
\textsc{SeqComb-UV} is a univariate dataset where the predictive signal is defined by the presence and combination of two subsequence “shapes”: increasing (I) and decreasing (D) trends. For each time series, two non-overlapping subsequences of length 10–20 time steps are first selected. Depending on the class label, either an increasing or decreasing pattern is inserted into each subsequence by superimposing a sinusoidal trend with a randomly chosen wavelength. The four classes are: class 0 (null, no structured subsequence), class 1 (I, I), class 2 (D, D), and class 3 (I, D). Thus, correct classification requires detecting both subsequences and their trend directions. Ground-truth explanations are defined as the time indices of the I and D subsequences that determine the class.

\textbf{\textsc{SeqComb-MV}.}
\textsc{SeqComb-MV} is the multivariate counterpart of \textsc{SeqComb-UV}. The overall construction and class structure are identical, but the I and D subsequences are distributed across different sensors. For each sample, the two subsequences are assigned to randomly selected channels. Ground-truth explanations mark the predictive subsequences on their respective sensors, i.e., both the time points at which the causal signal occurs and the channels on which they appear.

\textbf{\textsc{LowVar}.}
In \textsc{LowVar}, the predictive signal is defined by a low-variance region embedded in a multivariate time series. As in the SeqComb datasets, a random subsequence is chosen in each sample, and within this segment the NARMA background is replaced by Gaussian noise with low variance. Classes are distinguished by the mean of this low-variance Gaussian noise and by the sensor on which the subsequence occurs: for class 0 the subsequence has mean $-1.5$ on sensor 0; for class 1, mean $1.5$ on sensor 0; for class 2, mean $-1.5$ on sensor 1; and for class 3, mean $1.5$ on sensor 1. Unlike anomaly-based datasets, the predictive region here is not obviously different in amplitude from the background, making it a more challenging explanation task. Ground-truth explanations correspond to the low-variance subsequences on the appropriate sensor.

\subsection{Real-world datasets}

We also evaluate on three real-world time series classification benchmarks released by~\cite{queen2023encoding}: ECG for arrhythmia detection, Epilepsy for seizure detection, and PAM for human activity recognition. We briefly summarize each dataset and how we use it.

\textbf{ECG (MIT-BIH).}
The ECG dataset is derived from the MIT-BIH Arrhythmia database~\cite{moody2001impact}, which contains ECG recordings from 47 subjects sampled at 360 Hz. Following~\cite{queen2023encoding}, the raw recordings are windowed into segments of length 360 time steps, producing 92{,}511 labeled beats. Two cardiologists independently annotated each beat; we adopt three labels for classification: normal (N), left bundle branch block (L), and right bundle branch block (R). Clinical knowledge indicates that L and R diagnoses primarily depend on the QRS complex, so the QRS interval is taken as the ground-truth explanatory region for L and R classes. For N, no specific explanatory segment is assumed.

\textbf{Epilepsy.}
The Epilepsy dataset \citep{andrzejak2001indications} consists of single-channel EEG recordings from 500 subjects. For each subject, brain activity is recorded for 23.6 seconds and sampled at 178 Hz. The continuous signals are then partitioned and shuffled into 11{,}500 non-overlapping 1-second segments to weaken subject-specific correlations. The original annotations contain five categories: eyes open, eyes closed, EEG from a healthy region, EEG from a tumor region, and seizure activity. Following \citet{queen2023encoding}, we merge the first four categories into a single “non-seizure’’ class and treat seizure vs.\ non-seizure as a binary classification problem. No ground-truth explanatory labels are provided for this dataset; we use it only for evaluating prediction attacks and explanation manipulation in the absence of human-annotated rationales.

\textbf{PAM.}
The PAMAP (PAM) dataset \citep{reiss2012introducing} is a human activity recognition benchmark with wearable inertial measurement units. It contains continuous recordings of daily activities from 9 subjects measured by multiple sensors. Following \citet{queen2023encoding}, we discard the ninth subject due to the short recording length, segment the remaining multivariate time series into windows of 600 time steps with 50\% overlap, and remove rare classes with fewer than 500 samples. The resulting dataset has 5{,}333 segments, each with 600 time steps and 17 sensor channels, labeled into 8 activity classes (e.g., walking, sitting). The class distribution is approximately balanced after filtering. As with Epilepsy, PAM does not come with ground-truth explanation masks; we use it to study how attacks transfer to realistic multivariate sensor data.

\noindent\textbf{Evaluation setup.}
We evaluate attacks on all correctly classified test samples. For the Epilepsy dataset, we evaluate on correctly classified seizure samples targeting non-seizure, as seizure detection represents the clinically critical minority class.

\noindent\textbf{Ground-truth explanation masks.}
For the synthetic datasets (\textsc{SeqComb-UV}, \textsc{SeqComb-MV}, \textsc{LowVar}), ground-truth explanation masks are available by construction and mark the exact time indices (and, for multivariate data, channels) where the class-defining subsequences occur. Specifically, masks are $1$ on the I/D subsequences in \textsc{SeqComb-UV} and \textsc{SeqComb-MV}, and on the low-variance subsequence in \textsc{LowVar}, and $0$ elsewhere. For the ECG dataset, ground-truth explanations are defined on top of the cardiologist annotations: for L and R beats, we mark the QRS interval as the explanatory region, while normal beats have no designated explanatory segment. Epilepsy and PAM do not provide human-annotated rationales, and we therefore treat them as datasets without ground-truth explanations and only evaluate attacks in terms of their effect on predictions and explainer outputs.

\begin{table*}[t]
\caption{Summary of time series datasets used in our experiments. For each dataset, we report the number of samples, sequence length, dimensionality, and number of classes. Synthetic datasets and ECG come with ground-truth explanation masks as described above, whereas PAM and Epilepsy are used without human-annotated explanations.}
\label{tab:datasets}
\vskip 0.15in
\begin{center}
\begin{small}
\begin{sc}
\begin{tabular}{lccccc}
\toprule
Dataset      & \# Samples & Length & Dim. & \# Classes & Task \\
\midrule
\textsc{SeqComb-UV}   &   6{,}100  & 200    & 1    & 4          & Multiclass cls. \\
\textsc{SeqComb-MV}   &   6{,}100  & 200    & 4    & 4          & Multiclass cls. \\
\textsc{LowVar}       &   6{,}100  & 200    & 2    & 4          & Multiclass cls. \\
ECG          &  92{,}511  & 360    & 1    & 5          & ECG cls. \\
PAM          &   5{,}333  & 600    & 17   & 8          & Activity recog. \\
Epilepsy     &  11{,}500  & 178    & 1    & 2          & EEG cls. \\
\bottomrule
\end{tabular}
\end{sc}
\end{small}
\end{center}
\vskip -0.1in
\end{table*}


\section{Hyperparameter Settings}
\label{sec:description_of_parameters}
\noindent\textbf{PGD.}
For PGD, we perform targeted attacks by minimizing the targeted cross-entropy loss under an $L_\infty$ perturbation budget. We select PGD hyperparameters on a validation set via a small grid search over the number of iterations and step size:
iterations $\in\{100,200,300\}$ and step sizes $\in\{1.0,0.1,0.05,0.01,0.001\}$.

\noindent\textbf{ADV$^2$.}
We implement an ADV$^2$-style joint baseline adapted from~\citep{zhang2020interpretable} by optimizing a weighted objective
$
\mathcal{L}
=\lambda_{\mathrm{cls}}\mathcal{L}_{\mathrm{cls}}
+\lambda_{\mathrm{exp}}\mathcal{L}_{\mathrm{exp}},
$
where $\mathcal{L}_{\mathrm{cls}}$ is the targeted classification loss and $\mathcal{L}_{\mathrm{exp}}$ measures the discrepancy between the adversarial explanation and the reference explanation. For a fair comparison, we use the same time-series explanation-alignment loss (i.e., the same target definition and distance metric) as in \proposedmodel{}. We select ADV$^2$ hyperparameters on a validation set via a grid search over the number of iterations and the step size:
iterations $\in\{100,200,300\}$ and step sizes $\in\{1.0, 0.1,0.05,0.01,0.001\}$.
We also tune the weighting coefficients $\lambda_{\mathrm{cls}}$ and $\lambda_{\mathrm{exp}}$ on the validation set.


\noindent\textbf{BlackTreeS.}
For BlackTreeS, we tune the step size in $\{1.0,0.1,0.01,0.001\}$.
We also tune the number of candidate most important time positions by the tree search,
$\in\{T,\,0.5T,\,0.3T,\,0.1T\}$, and the number of iterations in $\{100,200,300\}$.

\noindent\textbf{SFAttack.}
For SFAttack, we tune the step size in $\{1.0,0.1,0.01,0.001\}$ and the number of iterations in $\{100,200,300\}$.
We additionally tune the cutoff frequency in $\{T/2,\,T/4,\,1\}$ and the shapelet-length ratio in $\{0.9,0.8,0.7,0.6,0.5\}$.

\noindent\textbf{Local and global Gaussian perturbations.}
Following AMEE~\citep{nguyen2024robust}, we first select the top-$k\%$ time steps by explainer saliency and then replace their values with Gaussian noise.
For the \emph{local} variant, we estimate per-time-step mean and variance $(\mu_t,\sigma_t^2)$ over the training set and sample $r_t \sim \mathcal{N}(\mu_t,\sigma_t^2)$.
For the \emph{global} variant, we instead estimate a global mean and variance $(\mu,\sigma^2)$ over all time steps and sample $r_t \sim \mathcal{N}(\mu,\sigma^2)$.
We set $k=10\%$ for all datasets unless otherwise noted.

\noindent\textbf{\proposedmodel{}.}
We first tune \proposedmodel{} on the ECG validation set and fix the optimization schedule for all experiments:
learning rates $1.0$ for both $\mathbf{M}_t$ and $\mathbf{M}_f$, $100$ outer iterations, and per-iteration inner-loop updates of $10$ steps for $\mathbf{M}_t$ and $20$ steps for $\mathbf{M}_f$, with $\lambda_{\mathrm{spa}} = 1.0$ and $\lambda_{\mathrm{con}}=1.0$ for the temporal mask $\mathbf{M}_t$.
We then reuse this schedule for all other datasets.
For non-ECG datasets, we only tune the loss-weight hyperparameters (e.g., the trade-off between classification and explanation objectives) on their validation sets, while keeping the learning rates, the inner/outer iteration counts, and $\lambda_{\mathrm{spa}}, \lambda_{\mathrm{con}}$ unchanged.





\section{Details of Metrics}
\label{sec:metrics}

We evaluate attacks along two axes: how well they mislead \emph{explanations} and how well they mislead \emph{predictions}. Below we detail the metrics used in each case.

\subsection{Misleading explanation performance}

Following~\cite{crabbe2021explaining}, we treat explanation quality as a binary classification problem over time--feature indices. For a time series $X \in \mathbb{R}^{T \times D}$, an explainer produces a soft saliency mask $M \in [0,1]^{T \times D}$, where larger values indicate higher attributed importance. When ground-truth explanations are available, we denote them by a binary matrix $Q \in \{0,1\}^{T \times D}$, where $Q[t,d] = 1$ if the input feature $X[t,d]$ is truly salient and $Q[t,d]=0$ otherwise.

Given a detection threshold $\tau \in (0,1)$, we obtain a binary estimate $\hat{Q}(\tau)$ from the soft mask $M$ via
\begin{equation}
\hat{Q}[t,d](\tau) =
\begin{cases}
1 & \text{if } M[t,d] \ge \tau, \\
0 & \text{otherwise.}
\end{cases}
\end{equation}
Let
\begin{align}
A &= \{(t,d) \in [1{:}T] \times [1{:}D] \mid Q[t,d] = 1\}, \\
\hat{A}(\tau) &= \{(t,d) \in [1{:}T] \times [1{:}D] \mid \hat{Q}[t,d](\tau) = 1\}
\end{align}
be the sets of truly salient indices and indices selected by the explainer at threshold $\tau$, respectively. We then define the precision and recall at threshold $\tau$ as
\begin{align}
\mathrm{P}(\tau) &= \frac{|A \cap \hat{A}(\tau)|}{|\hat{A}(\tau)|}, \\
\mathrm{R}(\tau) &= \frac{|A \cap \hat{A}(\tau)|}{|A|},
\end{align}
whenever the denominators are non-zero. Varying $\tau$ traces out precision and recall curves
\begin{align}
\mathrm{P} : (0,1) &\to [0,1], \quad \tau \mapsto \mathrm{P}(\tau), \\
\mathrm{R} : (0,1) &\to [0,1], \quad \tau \mapsto \mathrm{R}(\tau).
\end{align}

The \emph{area under precision} (AUP) and \emph{area under recall} (AUR) scores summarize these curves by integration over all thresholds:
\begin{align}
\mathrm{AUP} &= \int_0^1 \mathrm{P}(\tau)\, d\tau, \\
\mathrm{AUR} &= \int_0^1 \mathrm{R}(\tau)\, d\tau.
\end{align}
Intuitively, AUP captures how precise the explainer is on average when it highlights features as salient, while AUR captures how completely it recovers the truly salient features across thresholds. In our experiments, we compute these quantities numerically by discretizing $\tau$ over a fine grid. When we evaluate attacks that aim to steer explanations toward a target mask (e.g., a synthetic ground-truth pattern or a chosen rationale), we instantiate $Q$ with this target mask and report AUP/AUR between the adversarial explanation and $Q$.

\subsection{Misleading classification performance}

To quantify how effectively an attack alters model predictions, we report the \emph{Attack Success Rate} (ASR). For a given attack setting (e.g., targeted or untargeted), an adversarial trial is deemed successful if the attacked classifier output satisfies the desired misclassification criterion (e.g., switches from the true label to a specified target label). Formally,
\begin{equation}
\mathrm{ASR}
= \frac{\# \text{ successful trials}}{\# \text{ total trials}}.
\end{equation}
In our targeted experiments, a trial is counted as successful if the classifier prediction on the adversarial example equals the prescribed target label and differs from the original prediction on the clean input. ASR therefore measures the fraction of inputs on which the attack is able to enforce the desired misclassification under the specified perturbation budget.

While ASR captures whether the attack can hit the target label at least once, it does not assess how \emph{cleanly} the adversarial predictions match the desired target distribution. We therefore also report the \emph{target-class F1 score}. Let $y^{\prime}$ denote the target label assigned to an input (in our setting, randomly chosen to differ from the true label), and let $\hat{y}_{\text{adv}}$ be the classifier prediction on the adversarial example. In practice, we report the target-class F1 score by treating $y^{\prime}$ as the ground-truth label for each adversarial example and computing the standard F1 between $\hat{y}_{\text{adv}}$ and $y^{\prime}$ (micro-averaged over attacked samples). This metric complements ASR: a high ASR with low target F1 indicates scattered, inconsistent targeting, whereas simultaneously high ASR and target-class F1 indicate that the attack reliably and coherently relabels inputs into the desired target classes.

\section{Additional Experiments}
\label{app:additional_results}



\begin{table*}[t]
\caption{Attack results on the TimesFM~2.5 backbone. Best results on each dataset and metric are highlighted in bold.}
\label{tab:timesfm_attack}

\begin{center}
\begin{small}
\begin{sc}
\resizebox{\textwidth}{!}{
\begin{tabular}{llccccc}
\toprule
Dataset & Method & F1 $\uparrow$ & ASR $\uparrow$ & AUPRC $\uparrow$ & AUP $\uparrow$ & AUR $\uparrow$ \\
\midrule
\multirow{3}{*}{\textbf{LowVar}}
& PGD 
& $\mathbf{0.9996 \pm 0.0003}$ 
& $\mathbf{0.9996 \pm 0.0003}$ 
& $0.5021 \pm 0.0073$ 
& $0.3707 \pm 0.0058$ 
& $0.5476 \pm 0.0054$ \\
& ADV$^2$ 
& $0.9992 \pm 0.0005$ 
& $0.9992 \pm 0.0004$ 
& $0.7755 \pm 0.0068$ 
& $0.5649 \pm 0.0058$ 
& $0.7442 \pm 0.0049$ \\
& \textbf{TSEF (Ours)} 
& $0.9992 \pm 0.0005$ 
& $0.9992 \pm 0.0004$ 
& $\mathbf{0.7954 \pm 0.0067}$ 
& $\mathbf{0.5772 \pm 0.0058}$ 
& $\mathbf{0.7564 \pm 0.0049}$ \\
\midrule
\multirow{3}{*}{\textbf{SeqComb-UV}}
& PGD 
& $0.6615 \pm 0.0319$ 
& $0.6620 \pm 0.0357$ 
& $0.2731 \pm 0.0028$ 
& $0.2718 \pm 0.0028$ 
& $0.6460 \pm 0.0031$ \\
& ADV$^2$ 
& $0.6169 \pm 0.0260$ 
& $0.6243 \pm 0.0324$ 
& $0.2834 \pm 0.0030$ 
& $0.2798 \pm 0.0029$ 
& $0.6471 \pm 0.0031$ \\
& \textbf{TSEF (Ours)} 
& $\mathbf{0.7900 \pm 0.0403}$ 
& $\mathbf{0.7870 \pm 0.0414}$ 
& $\mathbf{0.3258 \pm 0.0033}$ 
& $\mathbf{0.3218 \pm 0.0032}$ 
& $\mathbf{0.6520 \pm 0.0031}$ \\
\midrule
\multirow{3}{*}{\textbf{SeqComb-MV}}
& PGD 
& $0.7391 \pm 0.0208$ 
& $0.7320 \pm 0.0228$ 
& $0.0853 \pm 0.0015$ 
& $0.1148 \pm 0.0025$ 
& $0.6287 \pm 0.0038$ \\
& ADV$^2$ 
& $0.7482 \pm 0.0184$ 
& $0.7418 \pm 0.0208$ 
& $0.1004 \pm 0.0017$ 
& $0.1246 \pm 0.0026$ 
& $0.6500 \pm 0.0030$ \\
& \textbf{TSEF (Ours)} 
& $\mathbf{0.8091 \pm 0.0126}$ 
& $\mathbf{0.8022 \pm 0.0159}$ 
& $\mathbf{0.1147 \pm 0.0020}$ 
& $\mathbf{0.1326 \pm 0.0028}$ 
& $\mathbf{0.6594 \pm 0.0037}$ \\
\bottomrule
\end{tabular}
}
\end{sc}
\end{small}
\end{center}
\end{table*}

\begin{table*}[t]
\caption{Attack results on the CNN backbone. Best results on each dataset and metric are highlighted in bold.}
\label{tab:cnn_attack}

\begin{center}
\begin{small}
\begin{sc}
\resizebox{\textwidth}{!}{
\begin{tabular}{llccccc}
\toprule
Dataset & Method & F1 $\uparrow$ & ASR $\uparrow$ & AUPRC $\uparrow$ & AUP $\uparrow$ & AUR $\uparrow$ \\
\midrule
\multirow{3}{*}{\textbf{LowVar}}
& PGD
& $\mathbf{0.9932 \pm 0.0008}$
& $\mathbf{0.9933 \pm 0.0008}$
& $0.1840 \pm 0.0039$
& $0.1307 \pm 0.0036$
& $0.2729 \pm 0.0034$ \\
& ADV$^2$
& $0.9922 \pm 0.0025$
& $0.9921 \pm 0.0025$
& $0.4583 \pm 0.0060$
& $0.3654 \pm 0.0054$
& $0.5155 \pm 0.0045$ \\
& \textbf{TSEF (Ours)}
& $0.9923 \pm 0.0017$
& $0.9923 \pm 0.0017$
& $\mathbf{0.5835 \pm 0.0063}$
& $\mathbf{0.4697 \pm 0.0056}$
& $\mathbf{0.6410 \pm 0.0044}$ \\
\midrule
\multirow{3}{*}{\textbf{SeqComb-UV}}
& PGD
& $0.5572 \pm 0.0205$
& $0.5301 \pm 0.0222$
& $0.1565 \pm 0.0019$
& $0.1332 \pm 0.0018$
& $0.6441 \pm 0.0033$ \\
& ADV$^2$
& $\mathbf{0.5574 \pm 0.0201}$
& $0.5291 \pm 0.0227$
& $0.1697 \pm 0.0020$
& $0.1389 \pm 0.0018$
& $0.6515 \pm 0.0033$ \\
& \textbf{TSEF (Ours)}
& $0.5560 \pm 0.0194$
& $\mathbf{0.5324 \pm 0.0207}$
& $\mathbf{0.2033 \pm 0.0024}$
& $\mathbf{0.1611 \pm 0.0021}$
& $\mathbf{0.6649 \pm 0.0032}$ \\
\midrule
\multirow{3}{*}{\textbf{SeqComb-MV}}
& PGD
& $\mathbf{0.5394 \pm 0.0082}$
& $\mathbf{0.5232 \pm 0.0105}$
& $0.4290 \pm 0.0039$
& $0.5653 \pm 0.0048$
& $0.5229 \pm 0.0041$ \\
& ADV$^2$
& $0.5313 \pm 0.0051$
& $0.5128 \pm 0.0061$
& $0.4769 \pm 0.0045$
& $0.5833 \pm 0.0050$
& $0.5356 \pm 0.0040$ \\
& \textbf{TSEF (Ours)}
& $0.5303 \pm 0.0055$
& $0.5209 \pm 0.0061$
& $\mathbf{0.5333 \pm 0.0051}$
& $\mathbf{0.5965 \pm 0.0051}$
& $\mathbf{0.5541 \pm 0.0039}$ \\
\bottomrule
\end{tabular}
}
\end{sc}
\end{small}
\end{center}
\end{table*}

\begin{table*}[t]
\caption{Attack results in the original-explanation-preserving setting, where $\mathbf{A}^{\prime}=\mathbf{A}$ and $\mathbf{A}$ denotes the original explanation of the clean input. Best results on each dataset and metric are highlighted in bold.}
\label{tab:original_explanation_preserving}

\begin{center}
\begin{small}
\begin{sc}
\resizebox{\textwidth}{!}{
\begin{tabular}{llccccc}
\toprule
Dataset & Method & F1 $\uparrow$ & ASR $\uparrow$ & AUPRC $\uparrow$ & AUP $\uparrow$ & AUR $\uparrow$ \\
\midrule
\multirow{3}{*}{\textbf{LowVar}}
& PGD
& $0.8441 \pm 0.0400$
& $0.8568 \pm 0.0355$
& $0.2536 \pm 0.0043$
& $0.1915 \pm 0.0040$
& $0.3185 \pm 0.0037$ \\
& ADV$^2$
& $0.8465 \pm 0.0396$
& $0.8578 \pm 0.0355$
& $0.6245 \pm 0.0052$
& $0.4936 \pm 0.0046$
& $0.6233 \pm 0.0040$ \\
& \textbf{TSEF (Ours)}
& $\mathbf{0.8703 \pm 0.0435}$
& $\mathbf{0.8839 \pm 0.0380}$
& $\mathbf{0.8342 \pm 0.0035}$
& $\mathbf{0.6938 \pm 0.0034}$
& $\mathbf{0.7700 \pm 0.0025}$ \\
\midrule
\multirow{3}{*}{\textbf{PAM}}
& PGD
& $0.6041 \pm 0.0124$
& $0.5967 \pm 0.0142$
& $0.6874 \pm 0.0057$
& $0.6390 \pm 0.0036$
& $0.4443 \pm 0.0040$ \\
& ADV$^2$
& $0.6091 \pm 0.0170$
& $0.5967 \pm 0.0182$
& $0.7542 \pm 0.0050$
& $0.6719 \pm 0.0033$
& $0.4519 \pm 0.0039$ \\
& \textbf{TSEF (Ours)}
& $\mathbf{0.6168 \pm 0.0182}$
& $\mathbf{0.6054 \pm 0.0185}$
& $\mathbf{0.8503 \pm 0.0037}$
& $\mathbf{0.6981 \pm 0.0031}$
& $\mathbf{0.4637 \pm 0.0038}$ \\
\midrule
\multirow{3}{*}{\textbf{Epilepsy}}
& PGD
& $0.1595 \pm 0.0111$
& $0.1908 \pm 0.0158$
& $0.6582 \pm 0.0061$
& $0.3454 \pm 0.0044$
& $0.8640 \pm 0.0021$ \\
& ADV$^2$
& $0.1595 \pm 0.0111$
& $0.1908 \pm 0.0158$
& $0.6783 \pm 0.0059$
& $0.3468 \pm 0.0043$
& $0.8670 \pm 0.0021$ \\
& \textbf{TSEF (Ours)}
& $\mathbf{0.1610 \pm 0.0149}$
& $\mathbf{0.1937 \pm 0.0212}$
& $\mathbf{0.7661 \pm 0.0046}$
& $\mathbf{0.3549 \pm 0.0042}$
& $\mathbf{0.8895 \pm 0.0021}$ \\
\bottomrule
\end{tabular}
}
\end{sc}
\end{small}
\end{center}
\end{table*}

\begin{table*}[t]
\caption{Surrogate transfer attack results across explainers on \textbf{LowVar}. Best results for each transfer pair and metric are highlighted in bold.}
\label{tab:surrogate_transfer_explainer}

\begin{center}
\begin{small}
\begin{sc}
\resizebox{\textwidth}{!}{
\begin{tabular}{llccccc}
\toprule
Transfer Pair & Method & F1 $\uparrow$ & ASR $\uparrow$ & AUPRC $\uparrow$ & AUP $\uparrow$ & AUR $\uparrow$ \\
\midrule
\multirow{3}{*}{\textsc{TimeX++} $\rightarrow$ \textsc{TimeX}}
& PGD
& $0.6287 \pm 0.0391$
& $0.6550 \pm 0.0342$
& $0.2179 \pm 0.0042$
& $0.1481 \pm 0.0034$
& $0.3545 \pm 0.0035$ \\
& ADV$^2$
& $0.6226 \pm 0.0457$
& $0.6507 \pm 0.0395$
& $0.2746 \pm 0.0045$
& $0.1886 \pm 0.0036$
& $0.4129 \pm 0.0036$ \\
& \textbf{TSEF (Ours)}
& $\mathbf{0.6514 \pm 0.0343}$
& $\mathbf{0.6686 \pm 0.0321}$
& $\mathbf{0.3606 \pm 0.0050}$
& $\mathbf{0.2477 \pm 0.0038}$
& $\mathbf{0.4976 \pm 0.0037}$ \\
\midrule
\multirow{3}{*}{\textsc{TimeX} $\rightarrow$ \textsc{TimeX++}}
& PGD
& $0.6583 \pm 0.0343$
& $0.6730 \pm 0.0301$
& $0.2564 \pm 0.0043$
& $0.1945 \pm 0.0040$
& $0.2983 \pm 0.0036$ \\
& ADV$^2$
& $0.6530 \pm 0.0384$
& $0.6684 \pm 0.0344$
& $0.3385 \pm 0.0049$
& $0.2559 \pm 0.0044$
& $0.3710 \pm 0.0041$ \\
& \textbf{TSEF (Ours)}
& $\mathbf{0.6655 \pm 0.0317}$
& $\mathbf{0.6769 \pm 0.0285}$
& $\mathbf{0.4224 \pm 0.0054}$
& $\mathbf{0.3203 \pm 0.0047}$
& $\mathbf{0.4487 \pm 0.0045}$ \\
\bottomrule
\end{tabular}
}
\end{sc}
\end{small}
\end{center}
\end{table*}


\subsection{Alternative classifier backbones.}
Tables~\ref{tab:timesfm_attack} and~\ref{tab:cnn_attack} report results on TimesFM~2.5~\cite{das2024decoder} and CNN backbones, respectively, both paired with \textsc{TimeX++}. 
On TimesFM~2.5, TSEF achieves the strongest explanation alignment across all datasets and improves target prediction success on \textsc{SeqComb-UV}, and \textsc{SeqComb-MV}. 
On \textsc{LowVar}, PGD obtains slightly higher F1/ASR, while TSEF remains comparable in target success and substantially improves explanation alignment over both PGD and ADV$^2$. 
On the CNN backbone, TSEF again consistently improves AUPRC, AUP, and AUR, while remaining competitive on F1/ASR. 
These results indicate that the observed prediction--explanation decoupling is not specific to the default Transformer classifier.

\subsection{Original-explanation-preserving setting.}
Table~\ref{tab:original_explanation_preserving} evaluates a setting where $\mathbf{A}^{\prime}=\mathbf{A}$, with $\mathbf{A}$ denoting the clean explanation of the original input. 
This setting is especially relevant when human-annotated rationales are unavailable and the clean explanation is used as the auditing reference. 
Across \textsc{LowVar}, PAM, and Epilepsy, TSEF outperforms PGD and ADV$^2$ on all five metrics. 
These results further support our main conclusion: targeted prediction manipulation can coexist with explanation preservation, and explanation stability alone is not a reliable indicator of decision trustworthiness.

\subsection{Surrogate transfer across explainers.}
Table~\ref{tab:surrogate_transfer_explainer} reports surrogate-transfer results between independently trained Transformer+TIMEX and Transformer+TIMEX++ pairs on \textsc{LowVar}. 
This setting evaluates whether adversarial examples optimized on one predictor--explainer pair can transfer to an unseen predictor--explainer pair. 
TSEF consistently outperforms PGD and ADV$^2$ across both transfer directions and all five metrics. 
In particular, TSEF improves not only target prediction success (F1/ASR), but also explanation alignment (AUPRC/AUP/AUR), suggesting that the prediction--explanation vulnerability is not restricted to the exact predictor--explainer pair used for optimization. 
We emphasize that this is surrogate-transfer evidence rather than a fully black-box attack; developing fully black-box joint attacks remains an important future direction.

\subsection{Hyperparameter Sensitivity Analysis}
\label{app:sensitivity}

Figure~\ref{fig:lambda_cls_sensitivity} examines the interplay between the two objectives in our dual-target attack.
We sweep the classification weight $\lambda_{\mathrm{cls}}$ while fixing the explanation weight at $\lambda_{\mathrm{exp}}=1$.

Across both \textsc{\textsc{LowVar}} and \textsc{\textsc{SeqComb-MV}}, increasing $\lambda_{\mathrm{cls}}$ consistently improves prediction targeting (higher ASR and target F1), but induces a monotonic degradation in explanation alignment (lower AUPRC/AUP/AUR).
This inverse relationship empirically confirms an inherent \emph{prediction--explanation trade-off}: as optimization prioritizes misclassification, the resulting perturbations become progressively less compatible with matching the reference rationale required for cover-up.

The \emph{strength} of this trade-off varies across datasets.
On the more complex multivariate \textsc{\textsc{SeqComb-MV}}, explanation alignment degrades only mildly (e.g., AUPRC drops by $\sim$0.01) even when $\lambda_{\mathrm{cls}}$ increases by two orders of magnitude, indicating that TSEF can often raise attack confidence without a catastrophic loss of camouflage quality.
Overall, $\lambda_{\mathrm{cls}}$ acts as a practical \emph{attack knob} that allows an adversary to navigate the stealth--success spectrum: smaller $\lambda_{\mathrm{cls}}$ favors high-fidelity cover-up under strict auditing, whereas larger $\lambda_{\mathrm{cls}}$ prioritizes brute-force misclassification when stealth is secondary.

\begin{figure}[t]
    \centering
    \includegraphics[width=0.7\linewidth]{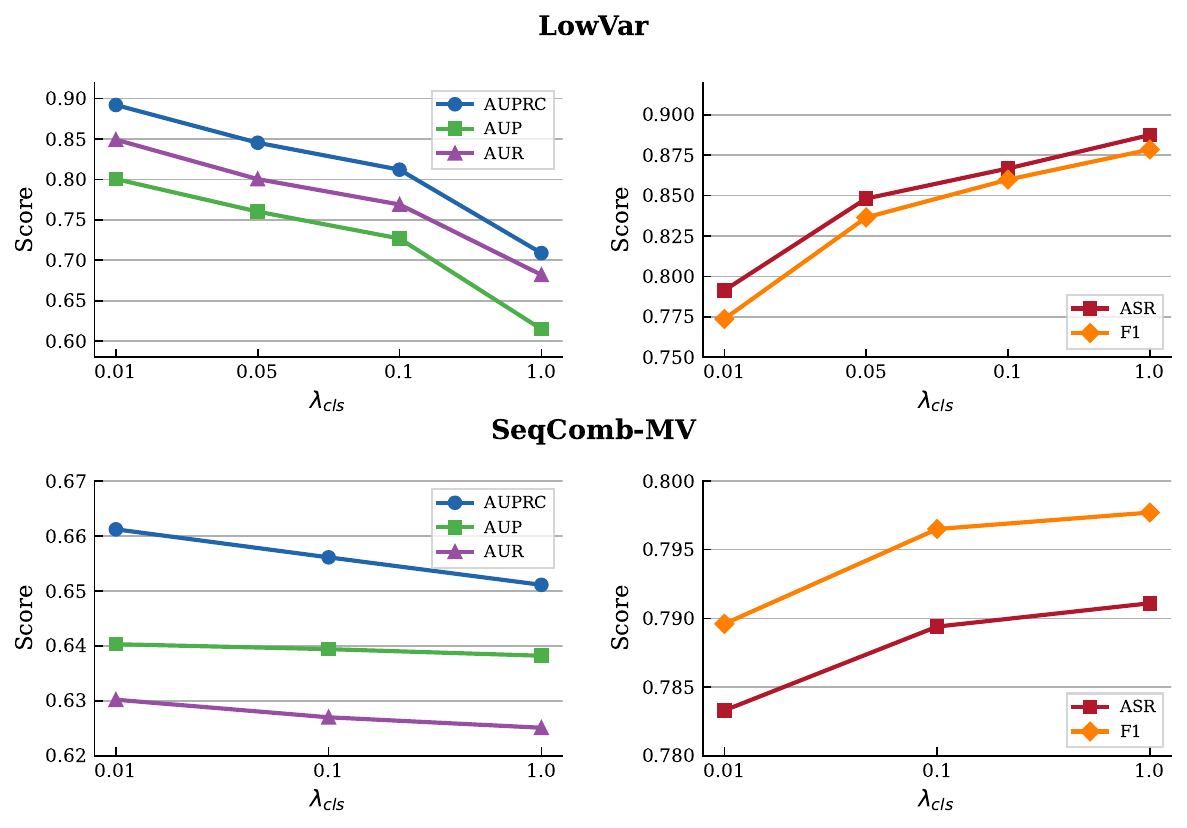}
    \caption{\textbf{Sensitivity to the classification weight $\lambda_{\mathrm{cls}}$.}
We sweep $\lambda_{\mathrm{cls}}$ (with the explanation weight fixed, $\lambda_{\mathrm{exp}}=1$) on \textsc{\textsc{LowVar}} (top) and \textsc{\textsc{SeqComb-MV}} (bottom).
\textbf{Left:} explanation-targeting metrics (AUPRC/AUP/AUR) decrease as $\lambda_{\mathrm{cls}}$ increases.
\textbf{Right:} prediction-targeting metrics (ASR/F1) improve, revealing a clear trade-off between prediction success and explanation alignment.}
    \label{fig:lambda_cls_sensitivity}
\end{figure}

\subsection{Ablation study}
\label{app:ablation}
Table~\ref{tab:ablation_lowvar_seqcombmv_formatted} ablates TSEF’s temporal vulnerability mask $\mathbf{M}_t$ (\textbf{w/o $\mathbf{M}_t$}: global spectral update) and frequency perturbation filter $\mathbf{M}_f$ (\textbf{w/o $\mathbf{M}_f$}: localized PGD update). 
\textbf{TSEF (Full)} yields the best joint trade-off, combining strong targeting (F1/ASR) with higher explanation alignment (AUPRC/AUP/AUR).

Removing $\mathbf{M}_t$ causes the largest drop in targeting, especially on \textsc{SeqComb-MV}: for \textsc{TimeX++}, ASR $0.789 \rightarrow 0.491$ and F1 $0.795 \rightarrow 0.508$; for \textsc{Integrated Gradients}, ASR $0.781 \rightarrow 0.544$. This indicates temporal localization is necessary to concentrate the budget on vulnerable windows and reach the target label.

Removing $\mathbf{M}_f$ also degrades the trade-off: on \textsc{SeqComb-MV}+\textsc{TimeX++}, ASR $0.789 \rightarrow 0.678$ and AUPRC $0.661 \rightarrow 0.632$; on \textsc{LowVar}+\textsc{TimeX++}, ASR $0.848 \rightarrow 0.657$. For \textsc{Integrated Gradients} on \textsc{SeqComb-MV}, enabling $\mathbf{M}_f$ improves explanation alignment (AUPRC $0.253 \rightarrow 0.334$, AUP $0.368 \rightarrow 0.376$). Together, $\mathbf{M}_t$ is key for effective targeting, while $\mathbf{M}_f$ improves coherent explanation control, and both are needed to satisfy TSEF’s dual objectives.

\begin{table*}[t]
\caption{
Ablation study on the \textsc{LowVar} and \textsc{SeqComb-MV} datasets. Best results are highlighted in bold.
}
\vskip -0.7em
\label{tab:ablation_lowvar_seqcombmv_formatted}
\centering
\begin{small}
\begin{sc}
\setlength{\tabcolsep}{3.5pt}
\renewcommand{\arraystretch}{1.1}

\resizebox{0.95\textwidth}{!}{%
\begin{tabular}{ll *{2}{ccccc}}
\toprule
\multicolumn{2}{c}{} & \multicolumn{5}{c}{\textbf{LowVar}} & \multicolumn{5}{c}{\textbf{SeqComb-MV}} \\
\cmidrule(lr){3-7}\cmidrule(lr){8-12}
Interpreter & Attack & F1 $\uparrow$ & ASR $\uparrow$ & AUPRC $\uparrow$ & AUP $\uparrow$ & AUR $\uparrow$ & F1 $\uparrow$ & ASR $\uparrow$ & AUPRC $\uparrow$ & AUP $\uparrow$ & AUR $\uparrow$ \\
\midrule
\multirow{3}{*}{\textsc{TimeX++}}
 & TSEF (w/o $\mathbf{M}_t$) & $0.743 \pm 0.030$ & $0.754 \pm 0.030$ & $0.796 \pm 0.004$ & $0.707 \pm 0.004$ & $0.702 \pm 0.003$ & $0.508 \pm 0.020$ & $0.491 \pm 0.021$ & $0.551 \pm 0.005$ & $0.621 \pm 0.005$ & $0.578 \pm 0.004$ \\
 & TSEF (w/o $\mathbf{M}_f$) & $0.619 \pm 0.052$ & $0.657 \pm 0.043$ & $0.785 \pm 0.004$ & $0.697 \pm 0.004$ & $0.696 \pm 0.003$ & $0.686 \pm 0.016$ & $0.678 \pm 0.018$ & $0.632 \pm 0.006$ & $0.639 \pm 0.005$ & $0.614 \pm 0.003$ \\
 & \textbf{TSEF (Full)} & $\mathbf{0.837 \pm 0.046}$ & $\mathbf{0.848 \pm 0.042}$ & $\mathbf{0.845 \pm 0.004}$ & $\mathbf{0.760 \pm 0.004}$ & $\mathbf{0.800 \pm 0.003}$ & $\mathbf{0.795 \pm 0.012}$ & $\mathbf{0.789 \pm 0.015}$ & $\mathbf{0.661 \pm 0.006}$ & $\mathbf{0.641 \pm 0.005}$ & $\mathbf{0.631 \pm 0.003}$ \\
\midrule
\multirow{3}{*}{\makecell[l]{Integrated\\Gradients}}
 & TSEF (w/o $\mathbf{M}_t$) & $0.729 \pm 0.024$ & $0.739 \pm 0.022$ & $0.533 \pm 0.007$ & $0.336 \pm 0.004$ & $0.629 \pm 0.003$ & $0.559 \pm 0.027$ & $0.544 \pm 0.027$ & $0.247 \pm 0.003$ & $0.353 \pm 0.004$ & $0.627 \pm 0.004$ \\
 & TSEF (w/o $\mathbf{M}_f$) & $0.660 \pm 0.038$ & $0.683 \pm 0.032$ & $0.411 \pm 0.006$ & $0.250 \pm 0.004$ & $0.602 \pm 0.003$ & $0.672 \pm 0.030$ & $0.660 \pm 0.033$ & $0.253 \pm 0.003$ & $0.368 \pm 0.004$ & $0.626 \pm 0.004$ \\
 & \textbf{TSEF (Full)} & $\mathbf{0.843 \pm 0.030}$ & $\mathbf{0.852 \pm 0.027}$ & $\mathbf{0.545 \pm 0.007}$ & $\mathbf{0.341 \pm 0.004}$ & $\mathbf{0.669 \pm 0.003}$ & $\mathbf{0.789 \pm 0.012}$ & $\mathbf{0.781 \pm 0.014}$ & $\mathbf{0.334 \pm 0.004}$ & $\mathbf{0.376 \pm 0.004}$ & $\mathbf{0.638 \pm 0.004}$ \\
\bottomrule
\end{tabular}%
}
\end{sc}
\end{small}
\end{table*}




\end{document}